\newcommand{\F}{\ensuremath{\mathcal{F}}}
\newcommand{\G}{\ensuremath{\mathcal{G}}}
 \newtheorem{theorem}{Theorem}
\newtheorem{lemma}{Lemma} 
\newtheorem{definition}{Definition}
\newcommand{\E}{\ensuremath{\mathbb{E}}}
\renewcommand{\P}{\ensuremath{\mathbb{P}}}
\newcommand{\x}[1]{\ensuremath{x(#1)}}
\newcommand{\KL}[2]{\ensuremath{D(#1 \, \| #2)}}
\newcommand{\parama}{\ensuremath{\alpha}}
\newcommand{\paramb}{\ensuremath{\beta}}
\newcommand{\set}{\ensuremath{\Sset}}
\newcommand{\meth}{\ensuremath{\mathcal{M}}}
\newcommand{\info}{\ensuremath{\phi}}
\renewcommand{\O}{\ensuremath{\mathcal{O}}}
\newcommand{\acc}{\ensuremath{\epsilon}}
\newcommand{\bestacc}{\epsilon^*}
\newcommand{\err}{\ensuremath{\delta}}
\renewcommand{\dim}{\ensuremath{d}}
\newcommand{\inrad}{\ensuremath{r}}
\newcommand{\Fconv}{\ensuremath{\mathcal{F}_{\operatorname{\scriptsize{cv}}}}}
\newcommand{\Fstrong}{\ensuremath{\mathcal{F}_{\operatorname{\scriptsize{scv}}}}}
    \newcommand{\Fsparse}{\ensuremath{\mathcal{F}_{\operatorname{\scriptsize{sp}}}}}
    \newcommand{\disc}{\ensuremath{\rho}}
    \newcommand{\errfun}{\ensuremath{\psi}}
    \newcommand{\esta}[1]{\ensuremath{\widehat{\parama}(#1)}}
    \newcommand{\fp}{\ensuremath{f^+}}
    \newcommand{\fn}{\ensuremath{f^-}}
    \newcommand{\half}{\ensuremath{\frac{1}{2}}}
    \newcommand{\coins}{\ensuremath{\Theta}}
    \newcommand{\packset}{\ensuremath{\mathcal{V}}}
    \newcommand{\indicator}{\ensuremath{\mathbb{I}}}
    \newcommand{\truth}{\ensuremath{\phi}}
    \newcommand{\oraclev}{\ensuremath{\widehat{f}}}
    \newcommand{\oracleg}{\ensuremath{\widehat{z}}}
    \newcommand{\lipexp}{\ensuremath{p}}
    \newcommand{\lipexpdual}{\ensuremath{q}}
    \newcommand{\grad}{\ensuremath{\nabla}}
    \newcommand{\ind}{\ensuremath{\mathbb{I}}} \def\elem{x}
    \def\fun{f}
\newcommand{\ConvSet}{\ensuremath{\mathbb{S}}}
 \def\FunSet{\mathcal{F}}
 \def\InfoSet{\mathcal{I}} \def\real{\mathbb{R}}
\def\methset{\mathbb{M}}
\def\defn{:=}
\def\funclassset{\mathcal{G}_{\operatorname{base}}}
\newenvironment{carlist}
 {\begin{list}{$\bullet$}
 {\setlength{\topsep}{0in} \setlength{\partopsep}{0in}
  \setlength{\parsep}{0in} \setlength{\itemsep}{\parskip}
  \setlength{\leftmargin}{0.1in} \setlength{\rightmargin}{0.08in}
  \setlength{\listparindent}{002in} \setlength{\labelwidth}{0.08in}
  \setlength{\labelsep}{0.1in} \setlength{\itemindent}{0in}}}
 {\end{list}}
\newcommand{\bcar}{\begin{carlist}}
\newcommand{\ecar}{\end{carlist}}
\newcommand{\elemstar}{\ensuremath{\elem^*}}
\newcommand{\pdim}{\ensuremath{d}}
\newcommand{\Hamm}{\ensuremath{\Delta_H}}
\newcommand{\paramtrue}{\ensuremath{\parama^*}}
\newcommand{\NemYu}{NY}
\newcommand{\subdiff}{\ensuremath{\partial}}
\newcommand{\Exs}{\ensuremath{\mathbb{E}}}
\newcommand{\pval}{\ensuremath{p}}
\newcommand{\qval}{\ensuremath{q}}
\newcommand{\inprod}[2]{\ensuremath{\langle #1 , \, #2 \rangle}}
\newcommand{\StochOrac}[1]{\ensuremath{\mathbb{O}_{\pval, #1}}}
\newcommand{\StochOractwoparam}[2]{\ensuremath{\mathbb{O}_{#1, #2}}}
\newcommand{\Fclass}{\ensuremath{\mathcal{F}}}
\newcommand{\numobs}{\ensuremath{n}}
\newcommand{\Sset}{\ensuremath{\mathbb{S}}}
\newcommand{\Ball}{\ensuremath{\mathbb{B}}}
\newcommand{\Lval}{\ensuremath{L}}
\newcommand{\mprob}{\ensuremath{\mathbb{P}}}
\newcommand{\qprob}{\ensuremath{\mathbb{Q}}}
\newcommand{\kdim}{\ensuremath{k}}
\newcommand{\order}{\ensuremath{\mathcal{O}}}
\newcommand{\alhat}{\ensuremath{\widehat{\alpha}}}
\newcommand{\Data}[1]{\ensuremath{\phi(x_1^T; #1)}}
\long\def\comment#1{}
\newcommand{\oracfun}{\ensuremath{\widehat{g}}}
\newcommand{\oracfuna}[1]{\ensuremath{\oracfun_{#1,A}}}
\newcommand{\oracsuba}[1]{\ensuremath{\oracleg_{#1,A}}}
\newcommand{\oracfunb}[1]{\ensuremath{\oracfun_{#1,B}}}
\newcommand{\oracsubb}[1]{\ensuremath{\oracleg_{#1,B}}}
\newcommand{\prox}{\Phi}
\def\norm#1{\|#1\|}
\def\dnorm#1{\|#1\|_*}
\newcommand{\Rparam}{\ensuremath{\alpha^*}}
\newcommand{\rparam}{\ensuremath{\alpha}}
\newcommand{\kull}[2]{\KL{#1}{#2}}
\newcommand{\Mysub}{\ensuremath{U}}
\newcommand{\strongcon}{\ensuremath{\gamma}}
\newcommand{\qpar}{\ensuremath{q}}
\newcommand{\HackoRama}{\ensuremath{\order(\dim^{1-\delta})}}
\newcommand{\xstar}{\ensuremath{x^*}}
\newcommand{\Prox}{\prox}
\newcommand{\widgraph}[2]{\includegraphics[keepaspectratio,width=#1]{#2}}
\newcommand{\interior}{\ensuremath{\mbox{int}}}
\long\def\@makecaption#1#2{
        \vskip 0.8ex
        \setbox\@tempboxa\hbox{\small {\bf #1:} #2}
        \parindent 1.5em  
        \dimen0=\hsize
        \advance\dimen0 by -3em
        \ifdim \wd\@tempboxa >\dimen0
                \hbox to \hsize{
                        \parindent 0em
                        \hfil 
                        \parbox{\dimen0}{\def\baselinestretch{0.96}\small
                                {\bf #1.} #2
                                } 
                        \hfil}
        \else \hbox to \hsize{\hfil \box\@tempboxa \hfil}
        \fi
        }
\begin{document}

\begin{center}

  {\bf{\LARGE{ Information-theoretic lower bounds on the oracle
        complexity of stochastic convex optimization}}}

\vspace*{.2in}

\begin{tabular}{ccc}
Alekh Agarwal$^1$ & &  Peter L. Bartlett$^{1,2,3}$ \\
{\texttt{alekh@cs.berkeley.edu}} & &
{\texttt{peter@berkeley.edu}} \\
& & \\
Pradeep Ravikumar$^4$ & & Martin J. Wainwright$^{1,2}$ \\
\texttt{pradeepr@cs.utexas.edu} & &
       {\texttt{wainwrig@stat.berkeley.edu}}
\end{tabular}

\vspace*{.2in}

\begin{tabular}{c}
Department of Electrical Engineering and Computer Sciences$^1$ \\
Department of Statistics$^2$ \\
UC Berkeley, Berkeley, CA  
\end{tabular}

\vspace*{.05in}

\begin{tabular}{ccc}
Mathematical Sciences$^3$ & &  Department of Computer Sciences$^4$   \\
QUT, Brisbane, Australia & & UT Austin, Austin, TX
\end{tabular}

\vspace*{.2in}

\today

\begin{abstract}
  Relative to the large literature on upper bounds on complexity of
  convex optimization, lesser attention has been paid to the
  fundamental hardness of these problems.  Given the extensive use of
  convex optimization in machine learning and statistics, gaining an
  understanding of these complexity-theoretic issues is important. In
  this paper, we study the complexity of stochastic convex
  optimization in an oracle model of computation. We improve upon
  known results and obtain tight minimax complexity estimates for
  various function classes. 
\end{abstract}

\end{center}


\section{Introduction}
\label{sec:intro}

Convex optimization forms the backbone of many algorithms for
statistical learning and estimation. Given that many statistical
estimation problems are large-scale in nature---with the problem
dimension and/or sample size being large---it is essential to make
efficient use of computational resources.  Stochastic optimization
algorithms are an attractive class of methods, known to yield
moderately accurate solutions in a relatively short
time~\cite{bottou08tradeoff}.  Given the popularity of such stochastic
optimization methods, understanding the fundamental computational
complexity of stochastic convex optimization is thus a key issue for
large-scale learning.  A large body of literature is devoted to
obtaining rates of convergence of specific procedures for various
classes of convex optimization problems. A typical outcome of such
analysis is an upper bound on the error---for instance, gap to the
optimal cost---as a function of the number of iterations. Such
analyses have been performed for many standard optimization
algorithms, among them gradient descent, mirror descent, interior
point programming, and stochastic gradient descent, to name a few. We
refer the reader to various standard texts on optimization
(e.g.,~\cite{Boyd02,Bertsekas_nonlin,nesterov2004book}) for further
details on such results.

On the other hand, there has been relatively little study of the
inherent complexity of convex optimization problems. To the best of
our knowledge, the first formal study in this area was undertaken in
the seminal work of Nemirovski and Yudin~\cite{yudin83book}, hereafter
referred to as \NemYu.  One obstacle to a classical
complexity-theoretic analysis, as these authors observed, is that of
casting convex optimization problems in a Turing Machine model. They
avoided this problem by instead considering a natural oracle model of
complexity, in which at every round the optimization procedure queries
an oracle for certain information on the function being
optimized. This information can be either noiseless or noisy,
depending on whether the goal is to lower bound the oracle complexity
of deterministic or stochastic optimization algorithms. Working within
this framework, the authors obtained a series of lower bounds on the
computational complexity of convex optimization problems, both in
deterministic and stochastic settings.  In addition to the original
text \NemYu~\cite{yudin83book}, we refer the interested reader to the
book by Nesterov~\cite{nesterov2004book}, and the lecture notes by
Nemirovski~\cite{nemirovskinotes} for further background.

In this paper, we consider the computational complexity of stochastic
convex optimization within this oracle model.
In particular, we improve upon the work of \NemYu~\cite{yudin83book}
for stochastic convex optimization in two ways. First, our lower
bounds have an improved dependence on the dimension of the space. In
the context of statistical estimation, these bounds show how the
difficulty of the estimation problem increases with the number of
parameters.  Second, our techniques naturally extend to give sharper
results for optimization over simpler function classes. We show that
the complexity of optimization for strongly convex losses is smaller
than that for convex, Lipschitz losses. Third, we show that for a
fixed function class, if the set of optimizers is assumed to have
special structure such as sparsity, then the fundamental complexity of
optimization can be significantly smaller.  All of our proofs exploit
a new notion of the discrepancy between two functions that appears to
be natural for optimization problems.  They involve a reduction from a
statistical parameter estimation problem to the stochastic
optimization problem, and an application of information-theoretic
lower bounds for the estimation problem. We note that special cases of
the first two results in this paper appeared in the extended
abstract~\cite{agarwal2009oracle}, and that a related study was
independently undertaken by Raginsky and Rakhlin~\cite{RagRak11}.

The remainder of this paper is organized as follows.  We begin in
Section~\ref{sec:prev} with background on oracle complexity, and a
precise formulation of the problems addressed in this paper.
Section~\ref{sec:results} is devoted to the statement of our main
results, and discussion of their consequences.  In
Section~\ref{sec:proofs}, we provide the proofs of our main results,
which all exploit a common framework of four steps.  More technical
aspects of these proofs are deferred to the appendices. \\
~
\vspace*{.05in}
~
\noindent \paragraph{Notation:}

For the convenience of the reader, we collect here some notation used
throughout the paper.  For $\pval \in [1, \infty]$, we use
$\|x\|_\pval$ to denote the $\ell_\pval$-norm of a vector $x \in
\real^\pval$, and we let $\qval$ denote the conjugate exponent,
satisfying $\frac{1}{\pval} + \frac{1}{\qval} = 1$.  For two
distributions $\mprob$ and $\qprob$, we use $\KL{\mprob}{\qprob}$ to
denote the Kullback-Leibler (KL) divergence between the
distributions. The notation $\indicator(A)$ refers to the 0-1 valued
indicator random variable of the set $A$. For two vectors
$\parama,\paramb \in \{-1,+1\}^d$, we define the Hamming distance
$\Hamm(\parama, \paramb) \defn \sum_{i=1}^\dim \indicator[\parama_i
  \ne \paramb_i]$.  Given a convex function $f:\real^d \rightarrow
\real$, the subdifferential of $f$ at $x$ is the set $\partial f(x)
\defn \{z \in \real^d \; \mid \; f(y) \geq f(x) + \inprod{z}{y-x}
\quad \mbox{for all $y \in \real^d$}\}$.


\section{Background and problem formulation}
\label{sec:prev}

We begin by introducing background on the oracle model of convex
optimization, and then turn to a precise specification of the problem
to be studied.

\subsection{Convex optimization in the oracle model}
\label{sec:setup}

Convex optimization is the task of minimizing a convex function $f$
over a convex set $\ConvSet\subseteq \mathbb{R}^\dim$.  Assuming that
the minimum is achieved, it corresponds to computing an element
$\elemstar_\fun$ that achieves the minimum---that is, an element
\mbox{$\elemstar_{\fun} \in \arg \min_{\elem \in \ConvSet}
  \fun(\elem)$.}  An \emph{optimization method} is any procedure that
solves this task, typically by repeatedly selecting values from
$\ConvSet$.  For a given class of optimization problems, our primary
focus in this paper is to determine lower bounds on the computational
cost, as measured in terms of the number of (noisy) function and
subgradient evaluations, required to obtain an $\epsilon$-optimal
solution to any optimization problem within the class.

More specifically, we follow the approach of Nemirovski and
Yudin~\cite{yudin83book}, and measure computational cost based on the
oracle model of optimization. The main components of this model are an
\emph{oracle} and an \emph{information set}. An \emph{oracle} is a
(possibly random) function $\info: \ConvSet \mapsto \InfoSet$ that
answers any query $\elem \in \ConvSet$ by returning an element
$\info(\elem)$ in an information set $\InfoSet$. The information set
varies depending on the oracle; for instance, for an exact oracle of
$m^{th}$ order, the answer to a query $x_t$ consists of $f(x_t)$ and
the first $m$ derivatives of $f$ at $x_t$. For the case of stochastic
oracles studied in this paper, these values are corrupted with
zero-mean noise with bounded variance. We then measure the
computational labor of any optimization method as the number of
queries it poses to the oracle.

In particular, given a positive integer $T$ corresponding to the
number of iterations, an optimization method $\meth$ designed to
approximately minimize the convex function $f$ over the convex set
$\ConvSet$ proceeds as follows. At any given iteration $t = 1,
\dots,T$, the method $\meth$ queries at $x_t \in \ConvSet$, and the
oracle reveals the information $\info(x_t, f)$.  The method then uses
the information $\{\info(x_1,f),\dots,\info(x_t,f)\}$ to decide at
which point $x_{t+1}$ the next query should be made.  For a given
oracle function $\info$, let $\methset_T$ denote the class of all
optimization methods $\meth$ that make $T$ queries according to the
procedure outlined above.  For any method $\meth \in \methset_T$, we
define its error on function $f$ after $T$ steps as
\begin{equation}
\label{EqnDefnSpecificError}
  \acc_T(\meth, f, \ConvSet, \info) \defn f(\elem_T) - \min_{\elem \in
    \ConvSet} f(\elem) \; = \; f(\elem_T) - f(\elem^*_f),
\end{equation}
where $x_T$ is the method's query at time $T$.  Note that by
definition of $\elem^*_f$ as a minimizing argument, this error is a
non-negative quantity.

When the oracle is stochastic, the method's query $x_T$ at time $T$ is
itself random, since it depends on the random answers provided by the
oracle.  In this case, the optimization error $\acc_T(\meth,
f,\ConvSet,\info)$ is also a random variable.  Accordingly, for the case
of stochastic oracles, we measure the accuracy in terms of the
expected value $\E_{\info} [\acc_T(\meth, f,\ConvSet,\info)]$, where the
expectation is taken over the oracle randomness.  Given a class of
functions $\FunSet$ defined over a convex set $\ConvSet$ and a class
$\methset_T$ of all optimization methods based on $T$ oracle queries,
we define the minimax error
\begin{equation}
\label{EqnDefnMinimaxError}
\bestacc_T(\F,\ConvSet; \info) \defn \inf_{\meth \in \methset_T}\sup_{f
  \in \F} \E_{\info} [\acc_T(\meth, f,\ConvSet,\info)].
\end{equation}
In the sequel, we provide results for particular classes of oracles.
So as to ease the notation, when the oracle $\info$ is clear from the
context, we simply write $\bestacc_T(\F, \ConvSet)$.


\subsection{Stochastic first-order oracles}

In this paper, we study stochastic oracles for which the information
set $\InfoSet \subset \real \times \real^\dim$ consists of pairs of
noisy function and subgradient evaluations.  More precisely, we have:
\begin{definition}
\label{DefnStochOrac}
For a given set $\ConvSet$ and function class $\Fclass$, the class of
first-order stochastic oracles consists of random mappings $\phi: S
\times \Fclass \rightarrow \InfoSet$ of the form \mbox{$\phi(x,f) =
  (\oraclev(x), \, \oracleg(x))$} such that
\begin{equation}
\label{EqnOracCondition}
\Exs[\oraclev(x)] = f(x), \quad \quad \Exs [\oracleg(x)] \in \subdiff
f(x), \quad \mbox{ and }
\quad \Exs \big[ \, \|\oracleg(x)\|_\pval^2  
  \, \big] \; \leq \sigma^2.
\end{equation}
\end{definition}
\noindent 
We use $\StochOrac{\sigma}$ to denote the class of all stochastic first-order
oracles with parameters $(\pval, \sigma)$.  Note that the first two
conditions imply that $\oraclev(x)$ is an unbiased estimate of the
function value $f(x)$, and that $\oracleg(x)$ is an unbiased estimate
of a subgradient $z \in \partial f(x)$.  When $f$ is actually
differentiable, then $\oracleg(x)$ is an unbiased estimate of the
gradient $\nabla f(x)$.  The third condition in
equation~\eqref{EqnOracCondition} controls the ``noisiness'' of the
subgradient estimates in terms of the $\ell_\pval$-norm. \\

Stochastic gradient methods are a widely used class of algorithms that
can be understood as operating based on information provided by a
stochastic first-order oracle.  As a particular example, consider a
function of the separable form $f(x) = \frac{1}{\numobs}
\sum_{i=1}^\numobs h_i(x)$, where each $h_i$ is differentiable.
Functions of this form arise very frequently in statistical problems,
where each term $i$ corresponds to a different sample and the overall
cost function is some type of statistical loss (e.g., maximum
likelihood, support vector machines, boosting etc.)  The natural
stochastic gradient method for this problem is to choose an index $i
\in \{1, 2, \ldots, \numobs \}$ uniformly at random, and then to
return the pair $(h_i(x), \nabla h_i(x))$.  Taking averages over the
randomly chosen index $i$ yields $\frac{1}{\numobs} \sum_{i=1}^\numobs
h_i(x) = f(x)$, so that $h_i(x)$ is an unbiased estimate of $f(x)$,
with an analogous unbiased property holding for the gradient of $h_i(x)$.

\subsection{Function classes of interest}

We now turn to the classes $\F$ of convex functions for which we study
oracle complexity. In all cases, we consider real-valued convex
functions defined over some convex set $\ConvSet$.  We assume without
loss of generality that $\ConvSet$ contains an open set around $0$,
and many of our lower bounds involve the maximum radius $\inrad =
\inrad(\Sset) > 0$ such that
\begin{align}
\ConvSet \; & \supseteq \; \Ball_\infty(\inrad) \, \defn \, \big \{ x \in
\real^\dim \, \mid \, \|x\|_\infty \leq \inrad \big \}.
\end{align}

\noindent Our first class consists of \emph{convex Lipschitz
  functions:}
\begin{definition}
\label{DefnConvLip}
For a given convex set $\ConvSet \subseteq \real^\dim$ and parameter
$\pval \in [1,\infty]$, the class $\Fconv(\ConvSet, L, \pval)$ consists
of all convex functions $f: \ConvSet \rightarrow \real$ such that 
\begin{align}
\label{EqnFlip}
\big|f(x) - f(y) \big| & \leq L \; \|x-y\|_{\qval} \qquad \mbox{for
  all $x, y \in \ConvSet$,}
\end{align}
where $\frac{1}{\qval} = 1 - \frac{1}{\pval}$.
\end{definition} 

We have defined the Lipschitz condition~\eqref{EqnFlip} in terms of
the conjugate exponent $\qval \in [1, \infty]$, defined by the
relation $\frac{1}{\qval} = 1 - \frac{1}{\pval}$.  To be clear, our
motivation in doing so is to maintain consistency with our definition
of the stochastic first-order oracle, in which we assumed that $\Exs
\big[ \, \|\oracleg(x)\|_\pval^2 \, \big] \; \leq \sigma^2$.  We note
that the Lipschitz condition~\eqref{EqnFlip} is equivalent to the
condition
\begin{align*}
\norm{z}_p \leq L\quad \forall z \in \partial f(x),~~\mbox{and for
  all } x \in \interior(\set).  
\end{align*}
If we consider the case of a differentiable function $f$, the
unbiasedness condition in Definition~\ref{DefnStochOrac} implies that
\begin{equation*}
\|\grad f(x)\|_\pval  = \| \Exs[\oracleg(x)]\|_\pval \;
\stackrel{(a)}{\leq} \; \Exs \|\oracleg(x)\|_\pval \;
\stackrel{(b)}{\leq} \; \sqrt{\Exs \|\oracleg(x)\|^2_\pval} \; \leq \;
\sigma,
\end{equation*}
where inequality (a) follows from the convexity of the
$\ell_\pval$-norm and Jensen's inequality, and inequality (b) is a
result of Jensen's inequality applied to the concave function
$\sqrt{x}$.  This bound implies that $f$ must be Lipschitz with
constant at most $\sigma$ with respect to the dual $\ell_\qval$-norm.
Therefore, we necessarily must have $L \leq \sigma$, in order for the
function class from Definition~\ref{DefnConvLip} to be consistent with
the stochastic first-order oracle. \\

\noindent A second function class consists of strongly convex
functions, defined as follows:
\begin{definition}
\label{DefnStrongConvex}
For a given convex set $\ConvSet \subseteq \real^\dim$ and parameter
$\pval \in [1,\infty]$, the class $\Fstrong(\ConvSet, \pval; L, \strongcon)$
consists of all convex functions $f: \ConvSet \rightarrow \real$ such
that the Lipschitz
condition~\eqref{EqnFlip} holds, and such that $f$ satisfies the
$\ell_2$-strong convexity condition
\begin{align}
f\left(\alpha x + (1-\alpha)y\right) & \geq \alpha f(x) +
(1-\alpha)f(y) + \alpha(1-\alpha)\frac{\strongcon^2}{2} \|x-y\|_2^2
\qquad \mbox{for all $x, y \in \ConvSet$.}
\end{align}
\end{definition}

In this paper, we restrict our attention to the case of strong
convexity with respect to the $\ell_2$-norm.  (Similar results on the
oracle complexity for strong convexity with respect to different norms
can be obtained by straightforward modifications of the arguments
given here).  For future reference, it should be noted that the
Lipschitz constant $L$ and strong convexity constant $\strongcon$
interact with one another.  In particular, whenever $\ConvSet \subset
\real^\dim$ contains the $\ell_\infty$-ball of radius $\inrad$, the
Lipschitz $L$ and strong convexity $\strongcon$ constants must satisfy
the inequality
\begin{align}
\label{EqnLipKap}
\frac{L}{\strongcon^2} & \; \geq \; \frac{\inrad}{4} \,
\dim^{1/\lipexp}.
\end{align} 
In order to establish this inequality, we note that strong convexity
condition with $\alpha = 1/2$ implies that
\begin{align*}
  \frac{\strongcon^2}{8} & \leq \frac{
    2f\left(\frac{x+y}{2}\right) - f(x) - f(y)}{2\|x-y\|^2_2} \; \leq \frac{L \|x -
    y \|_\lipexpdual}{2\|x-y\|_2^2}
\end{align*}
We now choose the pair $x, y \in \ConvSet$ such that $\|x-y\|_\infty =
\inrad$ and $\|x - y\|_2 = \inrad\sqrt{\dim}$.  Such a choice is
possible whenever $\ConvSet$ contains the $\ell_\infty$ ball of radius
$\inrad$.  Since we have $\|x-y\|_\lipexpdual \leq d^{1/\lipexpdual}
\|x -y\|_\infty$, this choice yields $\frac{\strongcon^2}{4} \leq \frac{
  L d^{\frac{1}{\lipexpdual} - 1}}{\inrad}$, which establishes the
claim~\eqref{EqnLipKap}. \\
 
\vspace*{.1in}

As a third example, we study the oracle complexity of optimization
over the class of convex functions that have sparse minimizers.  This
class of functions is well-motivated, since a large body of
statistical work has studied the estimation of vectors, matrices and
functions under various types of sparsity constraints.  A common theme
in this line of work is that the ambient dimension $\dim$ enters
only logarithmically, and so has a mild effect.  Consequently, it
is natural to investigate whether the complexity of optimization
methods also enjoys such a mild dependence on ambient dimension
under sparsity assumptions.

\vspace*{.1in}

For a vector $x \in \real^\dim$, we use $\|x\|_0$ to denote the number
of non-zero elements in $x$.  Recalling the set $ \Fconv(\ConvSet, L,
\pval)$ from Definition~\ref{DefnConvLip}, we now define a class of
Lipschitz functions with sparse minimizers.

\begin{definition}
\label{DefnSparseFunc}
For a convex set $\ConvSet \subset \real^d$ and positive integer
$\kdim \leq \lfloor d/2 \rfloor$, let
\begin{align}
\Fsparse(\kdim; \ConvSet, L) & \defn \big \{ f \in \Fconv(\ConvSet, L,
\infty) \, \mid \, \exists \quad x^* \in \arg \min_{x \in \ConvSet}
f(x) \quad \mbox{satisfying $\|x^*\|_0 \leq \kdim$.}  \big \}
\end{align}
be the class of all convex functions that are $L$-Lipschitz in the
$\ell_\infty$-norm, and have at least one $\kdim$-sparse optimizer.
\end{definition}
\noindent We frequently use the shorthand notation $\Fsparse(\kdim)$
when the set $\ConvSet$ and parameter $L$ are clear from context.


\section{Main results and their consequences}
\label{sec:results}

With the setup of stochastic convex optimization in place, we are now
in a position to state the main results of this paper, and to discuss
some of their consequences.  As previously mentioned, a subset of our
results assume that the set $\ConvSet$ contains an $\ell_\infty$ ball
of radius $\inrad = \inrad(\ConvSet)$. Our bounds scale with $\inrad$,
thereby reflecting the natural dependence on the size of the set
$\ConvSet$. Also, we set the oracle second moment bound $\sigma$ to be
the same as the Lipschitz constant $\Lval$ in our results.

\subsection{Oracle complexity for convex Lipschitz functions}

We begin by analyzing the minimax oracle complexity of optimization
for the class of bounded and convex Lipschitz functions $\Fconv$ from
Definition~\ref{DefnConvLip}.

\begin{theorem}
\label{ThmConvex}
Let $\ConvSet \subset \real^\dim$ be a convex set such that $\ConvSet
\supseteq \Ball_\infty(\inrad)$ for some $\inrad > 0$.  Then for a
universal constant $c_0 > 0$, the minimax oracle complexity over the
class $\Fconv(\ConvSet, \Lval, \pval)$ satisfies the following lower
bounds:
\begin{itemize}
\item[(a)] For $1 \leq \pval \leq 2$,
\begin{align}
\label{EqnConvLip12}
\sup_{\phi \in \StochOrac{\Lval}} \bestacc_T(\Fconv,\ConvSet; \info) &
\geq \min\left\{c_0\Lval \; \inrad \;
\sqrt{\frac{d}{T}},\frac{\Lval\inrad}{144} \right\}.
\end{align}
\item[(b)] For $\pval > 2$,
\begin{align}
\label{EqnConvLipG2}
\sup_{\phi \in \StochOrac{\Lval}} \bestacc_T(\Fconv,\ConvSet; \info) &
\geq \min\left \{c_0\Lval \; \inrad \;
\frac{d^{1-\frac{1}{\pval}}}{\sqrt{T}}, \frac{\Lval
  d^{1-1/\pval}\inrad}{72} \right\}.
\end{align}
\end{itemize}
\end{theorem}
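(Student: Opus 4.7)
The plan is to reduce stochastic convex optimization to a multi-way hypothesis testing problem and then invoke Fano's inequality on the testing problem. Following a four-step framework, I would first introduce a \emph{discrepancy} measure between convex functions,
\[
\disc(f,g) \defn \inf_{x \in \ConvSet}\bigl[f(x)+g(x)\bigr] - \inf_{x \in \ConvSet} f(x) - \inf_{x \in \ConvSet} g(x),
\]
and observe that if $x$ is simultaneously $\conveps$-optimal for $f$ and for $g$ then $\disc(f,g) \leq 2\conveps$. Consequently, given a family $\{f_\parama\}_{\parama \in \packset}$ whose pairwise discrepancies all exceed $2\conveps$, any $\conveps$-optimal procedure for an unknown $f_\parama$ yields a query $x_T$ that uniquely identifies $\parama$ via $\widehat{\parama}(x_T) \defn \arg\min_{\parama'} [f_{\parama'}(x_T) - \inf f_{\parama'}]$. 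Markov's inequality then turns an expected-error bound on $\bestacc_T$ into a high-probability bound on the induced test, and a Fano-style lower bound on the testing error transfers back into a lower bound on $\bestacc_T$.

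Second, I would construct the hard family. Using the Gilbert--Varshamov bound, I would extract a packing $\packset \subset \{-1,+1\}^\dim$ with $|\packset| \geq e^{\dim/8}$ and pairwise Hamming distance at least $\dim/4$; for each $\parama \in \packset$ I set
\[
f_\parama(x) \; \defn \; c \sum_{i=1}^\dim \Bigl[\bigl(\tfrac{1}{2}+\parama_i \conveps\bigr) \fp_i(x) + \bigl(\tfrac{1}{2}-\parama_i \conveps\bigr) \fn_i(x)\Bigr],
\]
with one-coordinate pieces $\fp_i(x) = (\inrad+x_i)_+$ and $\fn_i(x) = (\inrad-x_i)_+$, and with a scaling constant $c > 0$ to be tuned. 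Inside the cube $\Ball_\infty(\inrad)$, $f_\parama$ reduces to the affine function $c\inrad\dim + 2c\conveps \inprod{\parama}{x}$, with minimum $c\inrad\dim(1-2\conveps)$ attained at $x = -\inrad \parama$. A direct computation gives $\disc(f_\parama, f_\paramb) = 4c\inrad\conveps \Hamm(\parama, \paramb) \geq c\inrad\conveps\dim$ using $\Hamm \geq \dim/4$.

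Third --- and this is where the two regimes split --- I would design two different stochastic first-order oracles, each tailored so that $\E\|\oracleg\|_\pval^2$ saturates at $\Lval^2$. For $\pval \leq 2$, the oracle samples a coordinate $i$ uniformly from $\{1,\ldots,\dim\}$, flips an independent coin with bias $\tfrac{1}{2}+\parama_i\conveps$, and returns a response concentrated on coordinate $i$ with $\|\oracleg\|_\pval = \dim c$; thus $c = \Lval/\dim$. Since the response carries information about only one coordinate of $\parama$, a Bernoulli KL computation yields per-query KL divergence $O(\conveps^2)$ and hence total mutual information $O(T\conveps^2)$. Fano's inequality with $\log|\packset| \gtrsim \dim$ then forces $\conveps \gtrsim \sqrt{\dim/T}$, and combining with the discrepancy bound from Step~2 gives $\bestacc_T \gtrsim c\conveps\inrad\dim = \Lval \inrad \sqrt{\dim/T}$. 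For $\pval > 2$, the oracle instead returns a full Rademacher vector $c\eta \in \real^\dim$ with $\P[\eta_i=+1] = \tfrac{1}{2} + \parama_i \conveps$ independently across $i$, so $\|\oracleg\|_\pval = c\dim^{1/\pval}$ and $c = \Lval\dim^{-1/\pval}$. Now each query reveals $\dim$ independent Bernoulli samples, so per-query KL is $O(\dim\conveps^2)$ and Fano forces only $\conveps \gtrsim 1/\sqrt{T}$; the discrepancy bound then yields $\bestacc_T \gtrsim c\conveps\inrad\dim = \Lval\inrad\dim^{1-1/\pval}/\sqrt{T}$. In both cases the $\min$ with a $T$-independent quantity arises from the cap $\conveps \leq 1/4$ needed to keep the Bernoulli biases bounded away from $\{0,1\}$, which dominates for small $T$.

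The main obstacle will be the design of the two oracles and the matching calibration of $c$: the $\ell_\pval$ moment constraint behaves qualitatively differently for $\pval \leq 2$ (where concentrating the response on a single coordinate is cheap) and $\pval > 2$ (where spreading it across all coordinates is cheap), and extracting the correct dimensional scaling requires choosing, for each regime, the \emph{most} informative oracle compatible with the moment bound. The remaining ingredients --- the Gilbert--Varshamov packing, the elementary affine reduction inside $\Ball_\infty(\inrad)$, tensorization of KL, and Fano's inequality --- are standard but require careful bookkeeping to produce the explicit constants $144$ and $72$ stated in the theorem.
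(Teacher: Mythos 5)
Your proposal is correct and follows essentially the same approach as the paper: the same discrepancy premetric, the same Gilbert--Varshamov packing in Hamming distance over $\{-1,+1\}^\dim$, the same linear-over-the-cube construction from per-coordinate base functions, the same two oracles (single random coordinate for $\pval \le 2$, all-coordinate Bernoulli for $\pval>2$) calibrated to saturate the $\ell_\pval$ moment bound with $c=\Lval/\dim$ and $c=\Lval\dim^{-1/\pval}$ respectively, and the same Bernoulli-KL plus Fano computation yielding per-query information $\order(\conveps^2)$ versus $\order(\dim\conveps^2)$. The only cosmetic departures are your ReLU base functions $(\inrad \pm x_i)_+$ in place of the paper's shifted absolute values $|x_i \pm \inrad/2|$ (these agree up to scaling inside $\Ball_\infty(\inrad)$, which is all that matters), and a slightly different normalization of the prefactor and the uniqueness threshold ($2\conveps$ versus $\errfun/3$), neither of which changes the substance.
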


\paragraph{Remarks:} 
Nemirovski and Yudin~\cite{yudin83book} proved the lower bound $\Omega
\big(\frac{1}{\sqrt{T}} \big)$ for the function class $\Fconv$, in the
special case that $\ConvSet$ is the unit ball of a given norm, and the
functions are Lipschitz in the corresponding \emph{dual norm}.  For $p
\geq 2$, they established the minimax optimality of this
dimension-independent result by appealing to a matching upper bound
achieved by the method of mirror descent.  In contrast, here we do not
require the two norms---namely, that constraining the set $\ConvSet$
and that for the Lipschitz constraint---to be dual to one other;
instead, we give give lower bounds in terms of the largest
$\ell_\infty$ ball contained within the constraint set $\ConvSet$. As
discussed below, our bounds do include the results for the dual
setting of past work as a special case, but more generally, by
examining the relative geometry of an arbitrary set with respect to
the $\ell_\infty$ ball, we obtain results for arbitrary sets.  (We
note that the $\ell_\infty$ constraint is natural in many optimization
problems arising in machine learning settings, in which upper and
lower bounds on variables are often imposed.)  Thus, in contrast to
the past work of NY on stochastic optimization, our analysis gives
sharper dimension dependence under more general settings. It also
highlights the role of the geometry of the set $\ConvSet$ in
determining the oracle complexity.

In general, our lower bounds cannot be improved, and hence specify the
optimal minimax oracle complexity.  We consider here some examples to
illustrate their sharpness. Throughout we assume that $T$ is large
enough to ensure that the $1/\sqrt{T}$ term attains the lower bound
and not the $L/144$ term.  (This condition is reasonable given our
goal of understanding the rate as $T$ increases, as opposed to the
transient behavior over the first few iterations.)
\begin{enumerate}
\item[(a)] We start from the special case that has been primarily
  considered in past works. We consider the class $\Fconv(\Ball_q(1),
  L, p)$ with $q = 1-1/p$ and the stochastic first-order oracles
  $\StochOrac{L}$ for this class. Then the radius $r$ of the largest
  $\ell_\infty$ ball inscribed within the $\Ball_q(1)$ scales as
  $\inrad = d^{-1/q}$.  By inspection of the lower bounds
  bounds~\eqref{EqnConvLip12} and~\eqref{EqnConvLipG2}, we see that
\begin{align}
 \label{EqnDualLB}
 \sup_{\phi \in \StochOrac{\Lval}} \bestacc_T(\Fconv,\Ball_q(1);
 \info) & = \begin{cases}
\Omega \left(\Lval \; \frac{d^{1/2-1/q}}{\sqrt{T}} \right) & \mbox{for
  $1 \leq p \leq 2$} \\ \Omega \left( \frac{\Lval}{T} \right) & \mbox{for
  $p \geq 2$.}
 \end{cases}
 \end{align}
As mentioned previously, the dimension-independent lower bound for the
case $p \geq 2$ was demonstrated in Chapter 5 of NY, and shown to be
optimal\footnote{There is an additional logarithmic factor in the
  upper bounds for $p = \Omega(\log d)$.} since it is achieved
using mirror descent with the prox-function $\|\cdot\|_q$.
For the case of $1 \leq p < 2$, the lower bounds are also
unimprovable, since they are again achieved (up to constant factors)
by stochastic gradient descent.  See Appendix~\ref{AppMirror} for
further details on these matching upper bounds.

\item[(b)] Let us now consider how our bounds can also make sharp
  predictions for non-dual geometries, using the special case
  $\ConvSet = \Ball_\infty(1)$.  For this choice, we have
  $\inrad(\Sset) = 1$, and hence Theorem~\ref{ThmConvex} implies that
  for all $\pval \in [1,2]$, the minimax oracle complexity is lower
  bounded as
\begin{align*}
\sup_{\phi \in \StochOrac{\Lval}} \bestacc_T(\Fconv,\Ball_\infty(1); \info) & =
\Omega \left(\Lval \; \sqrt{\frac{d}{T}} \right).
\end{align*}
This lower bound is sharp for all $\pval \in [1,2]$.  Indeed, for any
convex set $\ConvSet$, stochastic gradient descent achieves a matching
upper bound (see Section 5.2.4, p. 196 of NY~\cite{yudin83book}, as
well as Appendix~\ref{AppMirror} in this paper for further
discussion).

\item[(c)] As another example, suppose that $\ConvSet = \Ball_2(1)$.
  Observe that this $\ell_2$-norm unit ball satisfies the relation
  $\Ball_2(1) \supset \frac{1}{\sqrt{\dim}} \Ball_\infty(1)$, so that
  we have $\inrad(\Ball_2(1)) = 1/\sqrt{\dim}$.  Consequently, for
  this choice, the lower bound~\eqref{EqnConvLip12} takes the form
\begin{align*}
\sup_{\phi \in \StochOrac{\Lval}} \bestacc_T(\Fconv,\Ball_2(1); \info) & =
\Omega \left(\Lval \; \frac{1}{\sqrt{T}} \right),
\end{align*}
which is a dimension-independent lower bound. This lower bound for
$\Ball_2(1)$ is indeed tight for $\pval \in [1,2]$, and as before,
this rate is achieved by stochastic gradient
descent~\cite{yudin83book}.
\item[(d)] Turning to the case of $\lipexp > 2$, when $\ConvSet =
  \Ball_{\infty}(1)$, the lower bound~\eqref{EqnConvLipG2} can be
  achieved (up to constant factors) using mirror descent with the dual
  norm $\|\cdot\|_\lipexpdual$; for further discussion, we again refer
  the reader to Section 5.2.1, p.~190 of NY~\cite{yudin83book}, as
  well as to Appendix~\ref{AppMirror} of this paper. Also, even though
  this lower bound requires the oracle to have only bounded variance,
  our proof actually uses a stochastic oracle based on Bernoulli
  random variables, for which all moments exist.  Consequently, at
  least in general, our results show that there is no hope of
  achieving faster rates by restricting to oracles with bounds on
  higher-order moments.  This is an interesting contrast to the case
  of having \emph{less} than two moments, in which the rates are
  slower. For instance, as shown in Section 5.3.1 of
  NY~\cite{yudin83book}, suppose that the gradient estimates in a
  stochastic oracle satisfy the moment bound $\E\|\oracleg(x)\|_p^b
  \leq \sigma^2$ for some $b \in [1,2)$.  In this setting, the oracle
    complexity is lower bounded by $\Omega \big(T^{-(b-1)/b}\big)$.
    Since $T^{\frac{b-1}{b}} \ll T^{\frac{1}{2}}$ for all $b \in [1,
      2)$, there is a significant penalty in convergence rates for
      having less than two bounded moments.
\item[(e)] Even though the results have been stated in a first-order
  stochastic oracle model, they actually hold in a stronger sense.
  Let $\nabla^i f(x)$ denote the $i_{th}$-order derivative of $f$
  evaluated at $x$, when it exists. With this notation, our results
  apply to an oracle that responds with a random function $\hat{f}_t$
  such that
  \begin{equation*}
    \E [\hat{f}_t(x)] = \E[f(x)], \quad \mbox{and} \quad \E [ \nabla^i
      \hat{f}_t(x)] = \nabla^i f(x) \quad \mbox{for all $x \in \set$
      and $i$ such that $\nabla^i f(x)$ exists,}
  \end{equation*}
 along with appropriately bounded second moments of all the
 derivatives.  Consequently, higher-order gradient information cannot
 improve convergence rates in a worst-case setting. Indeed, the result
 continues to hold even for the significantly stronger oracle that
 responds with a random function that is a noisy realization of the
 true function. In this sense, our result is close in spirit to a
 statistical sample complexity lower bound.  Our proof technique is
 based on constructing a ``packing set'' of functions, and thus has
 some similarity to techniques used in statistical minimax analysis
 (e.g.,~\cite{Hasminskii78,Birge83,YanBar99,Yu97}) and learning theory
 (e.g.,~\cite{VapnikChe74, EhrenfeuchtHaKeVa88, Simon97}). A
 significant difference, as will be shown shortly, is that the metric
 of interest for optimization is very different than those typically
 studied in statistical minimax theory.
\end{enumerate}

\subsection{Oracle complexity for strongly convex Lipschitz functions}
We now turn to the statement of lower bounds over the class of
Lipschitz and strongly convex functions $\Fstrong$ from
Definition~\ref{DefnStrongConvex}.  In all these statements, we assume
that $\strongcon^2 \leq \frac{4\Lval\dim^{-1/\lipexp}}{\inrad}$, as is
required for the definition of $\Fstrong$ to be sensible.
\begin{theorem}
\label{ThmStrong}
Let $\ConvSet = \Ball_\infty(\inrad)$.  Then there exist universal
constants $c_1, c_2 > 0$ such that the minimax oracle complexity over
the class $\Fstrong(\ConvSet, \pval; \Lval, \strongcon)$ satisfies the
following lower bounds:
\begin{itemize}
\item[(a)] For $p = 1$, we have
\begin{align}
\label{EqnStrong12}
  \sup_{\info \in \StochOrac{\Lval}} \bestacc(\Fstrong,\info) & \geq
  \min\left\{ c_1\frac{\Lval^2}{\strongcon^2T}, \;
  c_2\Lval\inrad\sqrt{\frac{d}{T}}, \;
  \frac{\Lval^2}{1152\strongcon^2d}, \;
  \frac{\Lval\inrad}{144}\right\}.
\end{align}
\item[(b)] For $p > 2$, we have:
\begin{align}
\label{EqnStrongG2}
 \sup_{\info \in \StochOrac{\Lval}} \bestacc(\Fstrong,\info) & \geq
 \min \left( c_1\frac{L^2d^{1-2/\lipexp}}{\strongcon^2T}, \;
 c_2\frac{\Lval\inrad d^{1-1/\lipexp}}{\sqrt{T}}, \;
 \frac{\Lval^2d^{1-2/\lipexp}}{1152\strongcon^2}, \; \frac{\Lval\inrad
   d^{1-1/\lipexp}}{144} \right).
\end{align}
\end{itemize}
\end{theorem}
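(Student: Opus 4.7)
The plan is to follow the same four-step framework that underpins Theorem~\ref{ThmConvex}: (i) reduce stochastic optimization over \Fstrong\ to a multiway hypothesis test over a finite packing of functions indexed by $\alpha\in\packset$; (ii) lower-bound the optimization error by the minimum pairwise ``discrepancy'' across the packing (the gap of $f_\alpha$ evaluated at the minimizer of $f_\beta$); (iii) upper-bound the mutual information between the $T$ oracle answers and the index $\alpha$ by a sum of per-query KL divergences; and (iv) invoke Fano's inequality (or, for the two-point variant used to extract the fast $1/T$ rate, Le Cam's inequality). The only genuinely new ingredient relative to Theorem~\ref{ThmConvex} is a packing construction tailored to the strong convexity constraint, together with a sharper conversion from ``identifying $\alpha$'' to ``optimizing'' that exploits the quadratic lower bound $(\strongcon^2/2)\|x-\elemstar_{f_\alpha}\|_2^2\le f_\alpha(x)-f_\alpha(\elemstar_{f_\alpha})$ implied by strong convexity.

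I would take the packing members to be of the form
\begin{equation*}
  f_\alpha(x) \;=\; \tfrac{\strongcon^2}{2}\,\|x\|_2^2 \;+\; c\sum_{i=1}^{\dim}\Big((\tfrac{1}{2}+\alpha_i\delta)\,\fp_i(x_i) + (\tfrac{1}{2}-\alpha_i\delta)\,\fn_i(x_i)\Big),
\end{equation*}
where $\alpha$ ranges over a $\dim/4$-Hamming packing $\packset\subset\{-1,+1\}^\dim$, and $\fp_i,\fn_i$ are the coordinatewise Lipschitz ``ramps'' already used to prove Theorem~\ref{ThmConvex}, with $c,\delta>0$ tunable. The quadratic regularizer automatically enforces $\strongcon$-strong convexity in $\ell_2$; the constant $c$ is chosen so that $f_\alpha$ has Lipschitz constant $\Lval$ in $\|\cdot\|_\qval$, a calculation that produces the extra $\dim^{1-1/\lipexp}$ factor in part~(b); and the range of admissible $\delta$ is limited by $\delta\le\inrad$ and by the compatibility condition~\eqref{EqnLipKap} between $\Lval,\strongcon,\inrad,\dim$. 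The stochastic oracle is the same Bernoulli construction used for Theorem~\ref{ThmConvex}, so the per-query KL divergence between the oracle distributions under $f_\alpha$ and $f_\beta$ scales like a constant times $\delta^2\,\Hamm(\alpha,\beta)/\Lval^2$, giving a cumulative $T$-query information bound of $\order(T\delta^2)$.

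The four terms in the $\min$ of~\eqref{EqnStrong12}--\eqref{EqnStrongG2} come from distinct tunings of this construction. Retaining the full hypercube packing together with the Theorem~\ref{ThmConvex} scale $\delta\asymp\inrad$ reproduces the slow $\Lval\inrad\sqrt{\dim/T}$ rate, while reducing to a two-point sub-packing with $\delta^2\asymp\Lval^2/(T\strongcon^4)$ and applying Le Cam combines with the strong-convexity amplification $\acc\gtrsim\strongcon^2\delta^2$ to yield the fast $\Lval^2/(\strongcon^2 T)$ rate (multiplied by the appropriate $\dim^{1-2/\lipexp}$ factor in part~(b)); the remaining two terms are just the boundedness ceilings $\delta\le\inrad$ and~\eqref{EqnLipKap}. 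The main technical obstacle I anticipate is ensuring that the construction is simultaneously $\strongcon$-strongly convex in $\ell_2$ and $\Lval$-Lipschitz in $\|\cdot\|_\qval$ while keeping the quadratic term's contribution to the gradient signal (and hence to the KL divergence) within the bound needed for step~(iii); once this balancing is in place, steps~(i)--(iv) reassemble essentially as in Theorem~\ref{ThmConvex}, with the key new step being the strong-convexity-based conversion in~(ii).
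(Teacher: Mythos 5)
Your four-step template is the right one, and separating the curvature into an explicit $\tfrac{\strongcon^2}{2}\|x\|_2^2$ regularizer (rather than blending the quadratic into the base functions as in Equation~\eqref{eqn:base-strong} with its extra parameter $\theta$) is a plausible alternative. However, two of your specific claims fail, and they are precisely the claims that the fast $1/T$ rate hinges on.

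First, the ``strong-convexity amplification'' $\acc \gtrsim \strongcon^2\delta^2$ is not what your construction produces. With $f_\alpha(x) = \tfrac{\strongcon^2}{2}\|x\|_2^2 + \tfrac{c}{d}\sum_i\bigl\{(1/2+\alpha_i\delta)\fp_i + (1/2-\alpha_i\delta)\fn_i\bigr\}$ and the Theorem~\ref{ThmConvex} ramps, the unconstrained stationarity condition gives $x_i^*(\alpha) = -\mu\alpha_i$ with $\mu = 2c\delta/(d\strongcon^2)$, so $\|\xstar_\alpha-\xstar_\beta\|_2^2 = 4\mu^2\Hamm(\alpha,\beta)$. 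The quadratic-growth bound then yields $\disc(f_\alpha,f_\beta) \gtrsim \strongcon^2\mu^2\Hamm(\alpha,\beta) \asymp \tfrac{c^2\delta^2}{d^2\strongcon^2}\Hamm(\alpha,\beta)$, not $\strongcon^2\delta^2$. With the $p=1$ Lipschitz constraint forcing $c\asymp L$ and $\Hamm \asymp d$ on the packing, this is $\disc \asymp L^2\delta^2/(d\strongcon^2)$ --- carrying a $1/d$. Second, because of that $1/d$, a two-point Le Cam argument is strictly weaker than the multiway Fano argument: Le Cam gives $T\delta^2 \gtrsim 1$ while Lemma~\ref{LemFano} on the $\sim 2^{d/2}$ hypercube packing (with Oracle A, $\ell=1$) gives $T\delta^2 \gtrsim d$. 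Only the latter cancels the $1/d$ in the discrepancy, producing $\acc \gtrsim L^2/(\strongcon^2 T)$; the two-point route gives only $\acc \gtrsim L^2/(d\strongcon^2 T)$. (Your proposed scale $\delta^2 \asymp L^2/(T\strongcon^4)$ also violates the Le Cam feasibility condition $T\delta^2 \lesssim 1$ except in the degenerate regime $L \lesssim \strongcon^2$, which contradicts the compatibility constraint~\eqref{EqnLipKap}.) In the paper, \emph{both} the $1/T$ and $1/\sqrt{T}$ terms come from the same multiway Fano reduction applied to the same packing; what changes is whether the discrepancy is quadratic or linear in $\delta$, which is governed by whether the curvature is strong enough to keep the per-coordinate minimizer $-\mu\alpha_i$ in the interior of $[-\inrad,\inrad]$. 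Your plan omits this boundary case (when $\mu > \inrad$), which is exactly what the two cases of Lemma~\ref{lemma:separation-strong} handle, and which is needed to recover the $L\inrad\sqrt{d/T}$ term as $\strongcon\to 0$.
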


As with Theorem~\ref{ThmConvex}, these lower bounds are sharp. In
particular, for $S = \Ball_\infty(1)$, stochastic gradient descent
achieves the rate~\eqref{EqnStrong12} up to logarithmic
factors~\cite{Hazan2007logarithmic}, and closely related algorithms
proposed in very recent works~\cite{hazan10epochgd,JuditskyNe10} match
the lower bound exactly up to constant factors.  It should be noted
Theorem~\ref{ThmStrong} exhibits an interesting phase transition
between two regimes.  On one hand, suppose that the strong convexity
parameter $\strongcon^2$ is large: then as long as $T$ is sufficiently
large, the first term $\Omega(1/T)$ determines the minimax rate, which
corresponds to the fast rate possible under strong convexity.  In
contrast, if we consider a poorly conditioned objective with
$\strongcon \approx 0$, then the term involving $\Omega(1/\sqrt{T})$
is dominant, corresponding to the rate for a convex objective.  This
behavior is natural, since Theorem~\ref{ThmStrong} recovers (as a
special case) the convex result with $\strongcon = 0$.  However, it
should be noted that Theorem~\ref{ThmStrong} applies only to the set
$\Ball_{\infty}(\inrad)$, and not to arbitrary sets $\ConvSet$ like
Theorem~\ref{ThmConvex}.  Consequently, the generalization of
Theorem~\ref{ThmStrong} to arbitrary convex, compact sets remains an
interesting open question.


\subsection{Oracle complexity for convex Lipschitz functions with sparse optima}

Finally, we turn to the oracle complexity of optimization over the
class $\Fsparse$ from Definition~\ref{DefnSparseFunc}.
\begin{theorem}
\label{ThmSparse}
  Let $\Fsparse$ be the class of all convex functions that are
  $L$-Lipschitz with respect to the $\|\cdot\|_\infty$ norm and that have
  a $\kdim$-sparse optimizer. Let $\ConvSet \subset \real^\dim$ be a
  convex set with $\Ball_\infty(\inrad) \subseteq \ConvSet$. Then
  there exists a universal constant $c_0 > 0$ such that for all $\kdim
  \leq \lfloor \frac{\dim}{2} \rfloor$, we have
\begin{align}
\label{EqnSparseLower}
\sup_{\info \in \StochOractwoparam{\infty}{\Lval}}
\bestacc(\Fsparse,\info) & \geq \min \left(c_0\Lval\inrad\sqrt{
  \frac{\kdim^2 \log
    \frac{\dim}{\kdim}}{T}},\frac{\Lval\kdim\inrad}{432} \right).
\end{align}
\end{theorem}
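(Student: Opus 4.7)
The proof instantiates the four-step framework used for Theorems~\ref{ThmConvex} and~\ref{ThmStrong}; only the packing and the hard-function family need to be adapted, so I focus on those. Let $\packset \subset \{-1, 0, +1\}^\dim$ consist of vectors with exactly $\kdim$ non-zero entries. A standard Gilbert--Varshamov / probabilistic packing argument yields a sub-collection of cardinality at least $\exp\!\big(c_1 \kdim \log(\dim/\kdim)\big)$ whose elements have pairwise Hamming distance at least $\kdim/4$; this replaces the dense $\{-1,+1\}^\dim$ hypercube used in Theorem~\ref{ThmConvex}.

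Next I associate to each $\parama \in \packset$ the separable function
\[
f_\parama(x) \; \defn \; \frac{c_2 \Lval}{\kdim} \sum_{i=1}^{\dim} \Big[\big(\tfrac{1}{2} + \delta \parama_i\big)\, g^+(x_i) + \big(\tfrac{1}{2} - \delta \parama_i\big)\, g^-(x_i)\Big],
\]
with a free parameter $\delta \in (0, 1/4]$, where $g^+, g^-$ are one-dimensional base functions chosen so that $g^+$ is uniquely minimized at $-\inrad$, $g^-$ at $+\inrad$, the sum $g^++g^-$ is strictly minimized at $0$, and both are $\Lval$-Lipschitz on $[-\inrad,\inrad]$. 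The $1/\kdim$ prefactor makes $f_\parama$ exactly $\Lval$-Lipschitz with respect to $\|\cdot\|_\infty$ on $\ConvSet \supseteq \Ball_\infty(\inrad)$; by separability and the choice of $g^\pm$, the minimizer of $f_\parama$ over $\ConvSet$ vanishes on coordinates with $\parama_i = 0$ and equals $\pm \inrad$ on the $\kdim$ active coordinates, placing $f_\parama$ in $\Fsparse(\kdim; \ConvSet, \Lval)$. A short calculation with the discrepancy $\disc$ introduced earlier in the paper then gives $\disc(f_\parama, f_\paramb) \geq c_3 \Lval \inrad \delta \,\Hamm(\parama,\paramb)/\kdim \geq c_4 \Lval \inrad \delta$ for all $\parama \neq \paramb$ in the packing.

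The remaining two steps parallel Theorem~\ref{ThmConvex}. Using a Bernoulli-type stochastic oracle that, at each query $x$, returns coordinatewise either the $g^+$-value-and-gradient or the $g^-$-value-and-gradient with bias $\delta \parama_i$, the per-query KL divergence between two packing elements is of order $\delta^2 \Hamm(\parama,\paramb)/\kdim^2 \leq \delta^2 / \kdim$, so after $T$ rounds the mutual information satisfies $I(\parama; Y_1^T) \leq C\, T \delta^2/\kdim$. The discrepancy lower bound converts any algorithm with expected error below $c_4 \Lval \inrad \delta / 3$ into an identifier of $\parama$ with probability at least $2/3$, at which point Fano's inequality forces $C T \delta^2/\kdim \gtrsim \kdim \log(\dim/\kdim)$; solving for $\delta$ yields the first branch of~\eqref{EqnSparseLower}, while the second branch $\Lval\kdim\inrad/432$ is the constant-level truncation that enforces $\delta \leq 1/4$. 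The principal obstacle is the second step: simultaneously enforcing $\Lval$-Lipschitzness with respect to $\|\cdot\|_\infty$, a $\kdim$-sparse minimizer for \emph{every} $\parama \in \packset$, and per-coordinate KL divergence of order $\delta^2/\kdim^2$ (rather than $\delta^2$) is what produces the extra factor of $\kdim$ inside the square root, distinguishing the oracle-complexity rate from the usual $\kdim \log(\dim/\kdim)/T$ rate of sparse statistical estimation.
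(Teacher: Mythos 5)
Your high-level structure matches the paper's: a $\kdim$-sparse packing with $\exp\big(\Theta(\kdim\log(\dim/\kdim))\big)$ elements and $\Theta(\kdim)$ pairwise Hamming separation, separable hard functions on $\Ball_\infty(\inrad)$ whose $\parama_i=0$ coordinates are pinned to zero (forcing $\kdim$-sparse optimizers), the discrepancy-to-identifiability reduction, and Fano. But both of your key quantitative claims are off by a power of $\kdim$, and the stated bound only emerges because the two errors cancel.

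First, the per-query KL divergence is $\Theta\big(\delta^2\,\Hamm(\parama,\paramb)\big)$, not $\Theta\big(\delta^2\,\Hamm(\parama,\paramb)/\kdim^2\big)$. Each differing coordinate's Bernoulli coin (bias $\tfrac12+\delta\parama_i$ versus $\tfrac12+\delta\paramb_i$) contributes $O(\delta^2)$ to the KL, and your $1/\kdim$ prefactor on $f_\parama$ rescales the \emph{values and gradients} the oracle returns but cannot change the distribution of the underlying coin tosses, which is all the KL measures. Second, the discrepancy estimate $\disc(f_\parama,f_\paramb)\gtrsim\Lval\inrad\delta$ is inconsistent with the required Lipschitz normalization: for a separable $f_\parama$ to lie in $\Fconv(\ConvSet,\Lval,\infty)$, i.e.\ to have subgradients bounded by $\Lval$ in $\ell_\infty$, the effective per-coordinate scale must be $\Theta(\Lval)$, so each differing coordinate contributes $\Theta(\Lval\inrad\delta)$ to $\disc$, giving $\disc\gtrsim\Lval\inrad\delta\,\Hamm(\parama,\paramb)\gtrsim\Lval\inrad\kdim\delta$ (this is precisely what Lemma~\ref{LemRhoSparse} establishes, up to constants); with your $1/\kdim$ prefactor and $c_2=\Theta(1)$ you would instead produce an $(\Lval/\kdim)$-Lipschitz function. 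In the correct Fano arithmetic the factor of $\kdim$ in $\disc$ forces $\delta\asymp\acc/(\Lval\inrad\kdim)$, while the per-round KL is $\Theta(\kdim\delta^2)$, matching the $\Theta(\kdim\log(\dim/\kdim))$ entropy and yielding $T\delta^2\gtrsim\log(\dim/\kdim)$; substituting gives $\acc\gtrsim\Lval\inrad\sqrt{\kdim^2\log(\dim/\kdim)/T}$. Your concluding sentence therefore misidentifies the mechanism: the $\kdim^2$ inside the square root comes from the discrepancy (and hence the optimization error) growing linearly in $\kdim$, not from any $\delta^2/\kdim^2$ attenuation of the per-coordinate KL, which is just the ordinary $\delta^2$.
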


\paragraph{Remark:} If $\kdim = \HackoRama$ for some
$\delta \in (0,1)$ (so that $\log \frac{\dim}{\kdim} = \Theta(\log
\pdim)$), then this bound is sharp up to constant factors.  In
particular, suppose that we use mirror descent based on the
$\|\cdot\|_{1+\varepsilon}$ norm with \mbox{$\varepsilon = 2\log
  d/(2\log d - 1)$.}  As we discuss in more detail in
Appendix~\ref{AppMirror}, it can be shown that this technique will
achieve a solution accurate to $\order(\sqrt{\frac{\kdim^2 \log
    \dim}{T}})$ within $T$ iterations; this achievable result matches
our lower bound~\eqref{EqnSparseLower} up to constant factors under
the assumed scaling $\kdim = \HackoRama$ .  To the best of our
knowledge, Theorem~\ref{ThmSparse} provides the first tight lower
bound on the oracle complexity of sparse optimization.


\comment{
\subsection{A general result}
Armed with the greater understanding from these proofs, we can now
state a general result for any function class $\F$.  The proof is
similar to that of earlier results.

\begin{theorem}
For any function class $\F \subseteq \Fconv$, suppose a given base set
of functions $\funclassset$ yields the subclass $\G(\err)$ and the
measure $\errfun$ as defined in \eqref{DefnPackingSize}, and this
function is monotone increasing. Then there exists a universal
constant $c > 0$ such that $\sup_{\info \in \O}
\bestacc(\Fstrong,\info) \geq c \,\errfun
\big(\sqrt{\frac{\log|\G(\err)|}{T}} \big)$.
\end{theorem}
}


\section{Proofs of results}
\label{sec:proofs}

We now turn to the proofs of our main results.  We begin in
Section~\ref{SecBasic} by outlining the framework and establishing
some basic results on which our proofs are based.
Sections~\ref{SecProofConvex} through~\ref{SecProofSparse} are devoted
to the proofs of Theorems~\ref{ThmConvex} through~\ref{ThmSparse}
respectively.

\subsection{Framework and basic results}
\label{SecBasic}

We begin by establishing a basic set of results that are exploited in
the proofs of the main results.  At a high-level, our main idea is to
show that the problem of convex optimization is at least as hard as
estimating the parameters of Bernoulli variables---that is, the biases
of $\dim$ independent coins.  In order to perform this embedding, for
a given error tolerance $\epsilon$, we start with an appropriately
chosen subset of the vertices of a $\dim$-dimensional hypercube, each
of which corresponds to some values of the $\dim$ Bernoulli
parameters. For a given function class, we then construct a
``difficult'' subclass of functions that are indexed by these vertices
of the hypercube.  We then show that being able to optimize any
function in this subclass to $\epsilon$-accuracy requires identifying
the hypercube vertex.  This is a multiway hypothesis test based on the
observations provided by $T$ queries to the stochastic oracle, and we
apply Fano's inequality~\cite{Cover} or Le Cam's
bound~\cite{LeCam1973,Yu97} to lower bound the probability of error.
In the remainder of this section, we provide more detail on each of
steps involved in this embedding.

\nocite{Hasminskii78,Birge83,Yu97}

\subsubsection{Constructing a difficult subclass of functions}

Our first step is to construct a subclass of functions $\G \subseteq
\FunSet$ that we use to derive lower bounds.  Any such subclass is
parametrized by a subset $\packset \subseteq \{-1, +1\}^d$ of the
hypercube, chosen as follows.  Recalling that $\Hamm$ denotes the
Hamming metric, we let $\packset = \{\alpha^1, \ldots, \alpha^M \}$ be
a subset of the vertices of the hypercube such that
\begin{align}
\label{EqnPackingDistance}
\Hamm(\alpha^j, \alpha^k) \geq \frac{\dim}{4} \quad \mbox{for all $j
  \neq k$,}
\end{align}
meaning that $\packset$ is a $\frac{\dim}{4}$-packing in the Hamming
norm.  It is a classical fact (e.g.,~\cite{Matousek}) that one can
construct such a set with cardinality $|\packset| \geq
(2/\sqrt{e})^{d/2}$.

Now let $\funclassset = \{\fp_i,\fn_i, \, i = 1,\hdots, \pdim\}$
denote some base set of $2 d$ functions defined on the convex set
$\ConvSet$, to be chosen appropriately depending on the problem at hand.
For a given tolerance $\err \in (0, \frac{1}{4}]$, we define, for each
vertex $\parama \in \packset$, the function
\begin{align}
\label{EqnGClassFun}
  g_\parama(x) & \defn \frac{c}{\pdim} \sum_{i =1}^\pdim \big \{ (1/2 +
  \parama_i \err ) \fp_i(x) + (1/2 - \parama_i \err) \,
  \fn_i(x)\big\}.
\end{align}
Depending on the result to be proven, our choice of the base functions
$\{\fp_i, \fn_i\}$ and the pre-factor $c$ will ensure that each
$g_\parama$ satisfies the appropriate Lipschitz and/or strong
convexity properties over $\ConvSet$.  Moreover, we will ensure that
that all minimizers $x_\parama$ of each $g_\parama$ are contained
within $\ConvSet$.

Based on these functions and the packing set $\packset$, we define the
function class
\begin{align}
\G(\err) & \defn \big \{ g_\parama, \; \parama \in \packset \big \}.
\end{align}
Note that $\G(\err)$ contains a total of $|\packset|$ functions by
construction, and as mentioned previously, our choices of the base
functions etc. will ensure that $\G(\err) \subseteq \FunSet$.  We
demonstrate specific choices of the class $\G(\err)$ in the proofs of
Theorems~\ref{ThmConvex} through~\ref{ThmSparse} to follow.

\subsubsection{Optimizing well is equivalent to function identification} 

We now claim that if a method can optimize over the subclass
$\G(\err)$ up to a certain tolerance, then it must be capable of
identifying which function $g_\parama \in \G(\err)$ was chosen.  We
first require a measure for the \emph{closeness} of functions in terms
of their behavior near each others' minima.  Recall that we use
$\elemstar_f \in \real^\pdim$ to denote a minimizing point of the
function $f$.  Given a convex set $S \subseteq \mathbb{R}^d$ and two
functions $f,g$, we define
\begin{align}
\label{EqnDefnDisc}
 \disc(f,g) & \defn \inf_{x \in \Sset} \big[f(x) + g(x) - f(\elemstar_f) -
 g(\elemstar_g) \big].
\end{align}
\begin{figure}[h]
\begin{center}
\psfrag{*xf*}{$\elemstar_f$} \psfrag{*xg*}{$\elemstar_g$}
\psfrag{*fs*}{$f(\elemstar_f)$} \psfrag{*gs*}{$g(\elemstar_g)$}
\psfrag{*fpgs*}{$\inf_{x \in S} \big \{ f(x) + g(x) \}$}
\widgraph{.5\textwidth}{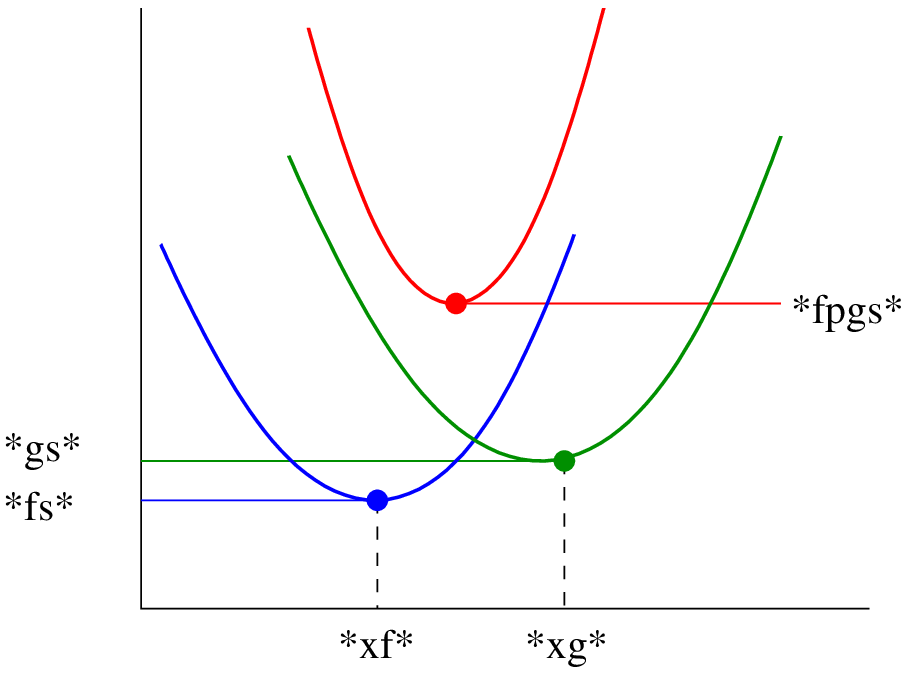}
\end{center}
\caption{Illustration of the discrepancy function $\disc(f,g)$.  The
  functions $f$ and $g$ achieve their minimum values $f(\elemstar_f)$
  and $g(\elemstar_g)$ at the points $\elemstar_f$ and $\elemstar_g$
  respectively.}
\end{figure}
This discrepancy measure is non-negative, symmetric in its arguments,
and satisfies $\disc(f,g) = 0$ if and only if $\elemstar_f =
\elemstar_g$, so that we may refer to it as a premetric.  (It does not
satisfy the triangle inequality nor the condition that $\disc(f,g) =
0$ if and only if $f = g$, both of which are required for $\disc$ to
be a metric.)

Given the subclass $\G(\err)$, we quantify how densely it is packed
with respect to the premetric $\disc$ using the quantity
\begin{align}
\label{DefnPackingSize}
\errfun(\G(\err)) & \defn \min_{\parama \neq \paramb \in \packset}
\disc(g_\parama, g_\paramb).
\end{align}
We denote this quantity by $\errfun(\err)$ when the class $\G$ is
clear from the context.  We now state a simple result that
demonstrates the utility of maintaining a separation under $\disc$
among functions in $\G(\err)$.
\begin{lemma}\label{lemma:redndet}
For any $\widetilde{\elem} \in \ConvSet$, there can be at most one
function $g_\parama \in \G(\err)$ such that
\begin{align}
\label{EqnUnique} 
g_\parama(\widetilde{\elem}) - \inf_{x \in \ConvSet} g_\parama(x) & \leq
\frac{\errfun(\err)}{3}.
\end{align}
\end{lemma}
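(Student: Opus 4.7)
The plan is to prove the lemma by a direct contradiction argument leveraging the definitions of the discrepancy $\disc$ and the packing quantity $\errfun(\err)$. Suppose, for the sake of contradiction, that there exist two distinct vertices $\parama \neq \paramb$ in $\packset$ such that the corresponding functions $g_\parama$ and $g_\paramb$ both satisfy the bound~\eqref{EqnUnique} at the common point $\widetilde{\elem} \in \ConvSet$.

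Adding the two inequalities gives
\begin{equation*}
\bigl[g_\parama(\widetilde{\elem}) - g_\parama(\elemstar_{g_\parama})\bigr] + \bigl[g_\paramb(\widetilde{\elem}) - g_\paramb(\elemstar_{g_\paramb})\bigr] \; \leq \; \tfrac{2}{3}\errfun(\err).
\end{equation*}
On the other hand, using the definition~\eqref{EqnDefnDisc} of $\disc$ applied to the point $\widetilde{\elem}$, the left-hand side is bounded below by $\disc(g_\parama, g_\paramb)$, since $\widetilde{\elem}$ is a particular choice for the infimum. By the definition~\eqref{DefnPackingSize} of $\errfun$, we have $\disc(g_\parama, g_\paramb) \geq \errfun(\G(\err)) = \errfun(\err)$, so chaining the inequalities yields $\errfun(\err) \leq \tfrac{2}{3}\errfun(\err)$, which is a contradiction whenever $\errfun(\err) > 0$.

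The only subtle point is to verify that the case $\errfun(\err) = 0$ is not an issue: in that case the conclusion of the lemma is vacuous in the sense that the bound~\eqref{EqnUnique} provides no nontrivial separation, but the constructions used in Theorems~\ref{ThmConvex}--\ref{ThmSparse} will ensure $\errfun(\err) > 0$ strictly, so we may assume this without loss of generality in the proof. There is no significant obstacle here; the argument is essentially a one-line triangle-inequality-style computation exploiting that $\widetilde{\elem}$ is a feasible witness in the infimum defining $\disc$. The real work of the paper lies downstream, in engineering the base functions $\{\fp_i, \fn_i\}$ so that the resulting class $\G(\err)$ simultaneously lies in the target function class $\Fclass$ and has a large enough separation $\errfun(\err)$ to make this identification lemma translate into a meaningful optimization lower bound.
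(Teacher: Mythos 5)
Your proof is correct and matches the paper's argument: both exploit that $\widetilde{\elem}$ is a feasible point in the infimum defining $\disc(g_\parama, g_\paramb)$, so the sum of the two optimization gaps at $\widetilde{\elem}$ dominates $\disc(g_\parama, g_\paramb) \geq \errfun(\err)$, forcing the contradiction $\errfun(\err) \leq \tfrac{2}{3}\errfun(\err)$. The only cosmetic difference is that the paper argues directly (any $\paramb \neq \parama$ must have gap at least $\tfrac{2}{3}\errfun(\err)$) rather than by contradiction, and your remark that the argument implicitly needs $\errfun(\err) > 0$ is a fair observation the paper leaves tacit.
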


\noindent Thus, if we have an element $\widetilde{\elem} \in \ConvSet$
that approximately minimizes one function in the set $\G(\err)$ up to
tolerance $\errfun(\err)$, then it cannot approximately minimize any
other function in the set.
\begin{proof}
 For a given $\widetilde{\elem} \in \ConvSet$,
suppose that there exists an $\alpha \in \packset$ such that
$g_\parama(\widetilde{\elem}) - g_\parama(\elemstar_\parama) \leq
\frac{\errfun(\err)}{3}$. From the definition of $\errfun(\err)$ in
\eqref{DefnPackingSize}, for any $\beta \in \packset,\; \beta \neq
\alpha$, we have
\begin{eqnarray*}
\errfun(\err) & \leq & g_\parama(\widetilde{\elem}) - \inf_{x \in
  \ConvSet}g_\parama(x) + g_\beta(\widetilde{\elem}) - \inf_{x \in
  \ConvSet}g_\beta(x) \; \leq \; \frac{\errfun(\err)}{3} +
g_\beta(\widetilde{\elem}) - \inf_{x \in \ConvSet}g_\beta(x).
\end{eqnarray*}	
Re-arranging yields the inequality $g_{\beta}(\widetilde{\elem}) -
g_\beta(\elemstar_\beta) \ge \frac{2}{3} \, \errfun(\err)$, from which
the claim~\eqref{EqnUnique} follows.

\end{proof}
%


Suppose that for some fixed but unknown function $g_{\paramtrue} \in
\G(\err)$, some method $\meth_T$ is allowed to make $T$ queries to an
oracle with information function $\info(\cdot \,; \,g_{\paramtrue})$,
thereby obtaining the information sequence 
\begin{align*}
\Data{g_\paramtrue} & \defn \{ \info(x_t; g_\paramtrue), t = 1, 2,
\ldots, T \}.
\end{align*}
Our next lemma shows that if the method $\meth_T$ achieves a low
minimax error over the class $\G(\err)$, then one can use its output
to construct a hypothesis test that returns the true parameter
$\paramtrue$ at least $2/3$ of the time.  (In this statement, we
recall the definition~\eqref{EqnDefnMinimaxError} of the minimax error
in optimization.)
\begin{lemma}\label{lemma:redn}
Suppose that based on the data $\Data{g_\paramtrue}$, there exists a
 method $\meth_T$ that achieves a minimax error satisfying
\begin{equation}
\label{EqnMiniBound}
\E\big[\acc_T(\meth_T, \G(\err), \ConvSet, \info)\big] \leq
\frac{\errfun(\err)}{9}.
\end{equation}
Based on such a method $\meth_T$, one can construct a hypothesis test
$\alhat: \Data{g_\paramtrue} \rightarrow \packset$ such that $\max
\limits_{\paramtrue \in \packset} \P_\truth[\alhat \ne \paramtrue]
\leq \frac{1}{3}$.
\end{lemma}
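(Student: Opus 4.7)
The plan is to construct the hypothesis test directly from the optimizer's output and combine Markov's inequality with the uniqueness guarantee of Lemma~\ref{lemma:redndet}.

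First, given oracle data $\Data{g_\paramtrue}$ generated under some unknown $\paramtrue \in \packset$, I would run $\meth_T$ to produce an output point $\widetilde{\elem}_T \in \ConvSet$. Define the test
\begin{equation*}
\alhat(\Data{g_\paramtrue}) \defn
\begin{cases}
\alpha & \text{if $\alpha \in \packset$ is the unique element with } g_\alpha(\widetilde{\elem}_T) - \inf_{x \in \ConvSet} g_\alpha(x) \leq \frac{\errfun(\err)}{3}, \\
\text{arbitrary} & \text{otherwise.}
\end{cases}
\end{equation*}
Lemma~\ref{lemma:redndet} guarantees that at most one such $\alpha$ exists, so $\alhat$ is well-defined.

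Second, I would relate the event $\{\alhat = \paramtrue\}$ to the optimization accuracy of $\meth_T$ on the true function $g_\paramtrue$. Concretely, if $\acc_T(\meth_T, g_\paramtrue, \ConvSet, \info) = g_\paramtrue(\widetilde{\elem}_T) - g_\paramtrue(\elemstar_{g_\paramtrue}) \leq \errfun(\err)/3$, then $\paramtrue$ itself satisfies the defining inequality in the test, and by the uniqueness from Lemma~\ref{lemma:redndet}, we must have $\alhat = \paramtrue$. Hence
\begin{equation*}
\P_\truth\big[\alhat \neq \paramtrue\big] \;\leq\; \P_\truth\!\left[\acc_T(\meth_T, g_\paramtrue, \ConvSet, \info) > \frac{\errfun(\err)}{3}\right].
\end{equation*}

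Third, I would invoke Markov's inequality on the non-negative random variable $\acc_T(\meth_T, g_\paramtrue, \ConvSet, \info)$. By hypothesis~\eqref{EqnMiniBound}, its expectation is at most $\errfun(\err)/9$ for every $\paramtrue \in \packset$, so
\begin{equation*}
\P_\truth\!\left[\acc_T(\meth_T, g_\paramtrue, \ConvSet, \info) > \frac{\errfun(\err)}{3}\right] \;\leq\; \frac{\errfun(\err)/9}{\errfun(\err)/3} \;=\; \frac{1}{3},
\end{equation*}
uniformly in $\paramtrue$. Combining these two displays yields $\max_{\paramtrue \in \packset} \P_\truth[\alhat \neq \paramtrue] \leq 1/3$, as desired.

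There is no real obstacle here: the argument is essentially a reduction, and the only subtlety is making sure the test is well-defined (handled by Lemma~\ref{lemma:redndet}) and that the Markov step uses the right threshold $\errfun(\err)/3$ (which is exactly the tolerance appearing in~\eqref{EqnUnique}, and which is three times the expectation bound $\errfun(\err)/9$ so as to yield a failure probability of $1/3$).
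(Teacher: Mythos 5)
Your proposal is correct and follows essentially the same route as the paper's proof: construct the estimator by returning the unique $\alpha$ whose function $g_\alpha$ is approximately minimized at $x_T$ (uniqueness by Lemma~\ref{lemma:redndet}), then bound the failure probability by Markov's inequality applied to the nonnegative error $\acc_T$ with threshold $\errfun(\err)/3$. The only cosmetic difference is that the paper chooses $\alhat$ uniformly at random when no such $\alpha$ exists, whereas you say ``arbitrary''; this does not affect the upper bound.
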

\begin{proof}
Given a method $\meth_T$ that satisfies the
bound~\eqref{EqnMiniBound}, we construct an estimator $\esta{\meth_T}$
of the true vertex $\paramtrue$ as follows.  If there exists some
$\alpha \in \packset$ such that $g_\parama(x_T) - g_\parama(x_\parama)
\leq \frac{\errfun(\err)}{3}$ then we set $\esta{\meth_T}$ equal to
$\alpha$. If no such $\alpha$ exists, then we choose $\esta{\meth_T}$
uniformly at random from $\packset$. From Lemma~\ref{lemma:redndet},
there can exist only one such $\alpha \in \packset$ that satisfies
this inequality.  Consequently, using Markov's inequality, we have
$\P_{\truth}[\esta{\meth_T} \ne \paramtrue] \leq \P_{\truth}
\big[\acc_T(\meth_T, g_{\paramtrue}, \ConvSet, \info) \geq
\errfun(\err)/3 \big] \; \leq \; \frac{1}{3}$.  Maximizing over
$\paramtrue$ completes the proof.
\end{proof}

\noindent We have thus shown that having a low minimax optimization
error over $\G(\err)$ implies that the vertex $\paramtrue \in
\packset$ can be identified most of the time.


\subsubsection{Oracle answers and coin tosses} 

We now describe stochastic first order oracles $\phi$ for which the
samples $\Data{g_\parama}$ can be related to coin tosses.  In
particular, we associate a coin with each dimension $i \in \{1, 2,
\ldots, \pdim\}$, and consider the set of coin bias vectors lying in
the set
\begin{eqnarray}\label{EqnBiasSet}
\coins(\err) = \big\{ (1/2 + \parama_1\err,\dots, 1/2 +  
\parama_d \err ) \, \mid \, \parama \in \packset \big\},
\end{eqnarray}
Given a particular function $g_\parama \in \G(\delta)$---or
equivalently, vertex $\parama \in \packset$---we consider two
different types of stochastic first-order oracles $\phi$, defined as
follows: \\

\framebox[0.95\textwidth]{
\parbox{0.9\textwidth}{
\paragraph{Oracle A: 1-dimensional unbiased gradients}
\begin{enumerate}
  \item[(a)] Pick an index $i \in \{1,\dots,d \}$ uniformly at random.
  \item[(b)] Draw $b_{i} \in \{0,1\}$ according to a Bernoulli
  distribution with parameter $1/2 + \parama_{i} \err$.

 \item[(c)] For the given input $x \in \ConvSet$, return the value
$\oracfuna{\parama}(x)$ and a sub-gradient \mbox{$\oracsuba{\parama}
(x) \in \partial \oracfuna{\parama}(x)$} of the function
  \begin{equation*}
    \oracfuna{\parama} \; \defn \, c \big[b_{i}\fp_{i} + (1-b_{i})
    \fn_{i} \big].
  \end{equation*}
\end{enumerate}
} } \\

By construction, the function value and gradients returned by Oracle A
are unbiased estimates of those of $g_\alpha$.  In particular, since
each co-ordinate $i$ is chosen with probability $1/\dim$, we have
\begin{align*}
\Exs \big[ \oracfuna{\parama}(x)\big] & =\frac{c}{d} \sum_{i=1}^d
\big[\Exs [b_{i}]\fp_{i}(x) + \Exs[1-b_{i}] \fn_i(x) \big] \; = \;
g_\alpha(x),
\end{align*}
with a similar relation for the gradient.  Furthermore, as long as the
base functions $\fp_i$ and $\fn_i$ have gradients bounded by $1$, we
have $\Exs[\|\oracsuba{\parama}(x)\|_\pval] \leq c$ for all $\pval \in
[1, \infty]$.  \\

Parts of proofs are based on an oracle which responds with function
values and gradients that are \emph{$\dim$-dimensional} in nature.\\

\framebox[0.95\textwidth]{
\parbox{0.9\textwidth}{
\paragraph{Oracle B: $\dim$-dimensional unbiased gradients}
\begin{enumerate}
    \item[(a)] For \mbox{$i = 1,\dots,\dim$,} draw $b_i \in \{0,1\}$
according to a Bernoulli distribution with \mbox{parameter} $1/2 +
\parama_{i} \err$.
\item[(b)] For the given input $x \in \ConvSet$, return the value
$\oracfunb{\parama}(x)$ and a sub-gradient
\mbox{$\oracsubb{\parama}(x) \in \partial \oracfunb{\parama}(x)$} of
the function
\begin{equation*}
\oracfunb{\parama} \; \defn \frac{c}{\dim}\sum_{i=1}^\dim \big[b_i
 \fp_i + (1-b_i) \fn_i \big].
\end{equation*}
\end{enumerate}
}} \\

As with Oracle A, this oracle returns unbiased estimates of the
function values and gradients.  We frequently work with functions
$\fp_i, \fn_i$ that depend only on the $i^{th}$ coordinate $\x{i}$. In
such cases, under the assumptions $|\frac{\partial \fp_i}{\x{i}}| \leq
1$ and $|\frac{\partial \fn_i}{\x{i}}| \leq 1$, we have
\begin{align}
  \label{eqn:fulldnorm}
\|\oracsubb{\parama}(x)\|_p^2 &= \frac{c^2}{d^2} \left(
\sum_{i=1}^\dim \left| b_i \frac{\partial\fp_i(x)}{\partial \x{i}} +
(1-b_i) \frac{\partial\fn_i(x)}{\partial \x{i}} \right|^p \right )^{2/p}
\; \leq \; c^2 d^{2/p-2}.
\end{align}
In our later uses of Oracles A and B, we choose the pre-factor $c$
appropriately so as to produce the desired Lipschitz constants.

\subsubsection{Lower bounds on coin-tossing}   

Finally, we use information-theoretic methods to lower bound the
probability of correctly estimating the true parameter $\paramtrue \in
\packset$ in our model.  At each round of either Oracle A or Oracle B,
we can consider a set of $\dim$ coin tosses, with an associated vector
$\theta^* = (\frac{1}{2} + \paramtrue_1 \err, \ldots, \frac{1}{2} +
\paramtrue_\dim \err)$ of parameters.  At any round, the output of
Oracle A can (at most) reveal the instantiation $b_i \in \{0,1\}$ of a
randomly chosen index, whereas Oracle B can at most reveal the entire
vector $(b_1, b_2, \ldots, b_\dim)$.  Our goal is to lower bound the
probability of estimating the true parameter $\paramtrue$, based on a
sequence of length $T$. As noted previously in remarks following
Theorem~\ref{ThmConvex}, this part of our proof exploits classical
techniques from statistical minimax theory, including the use of
Fano's inequality (e.g.,~\cite{Hasminskii78,Birge83,YanBar99,Yu97})
and Le Cam's bound (e.g.,~\cite{LeCam1973,Yu97}).

\begin{lemma}
\label{LemFano}
Suppose that the Bernoulli parameter vector $\Rparam$ is chosen
uniformly at random from the packing set $\packset$, and suppose that
the outcome of $\ell \leq \dim$ coins chosen uniformly at random is
revealed at each round $t = 1, \ldots, T$.  Then for any $\err \in (0,
1/4]$, any hypothesis test $\widehat{\rparam}$ satisfies
\begin{align}
\label{EqnFanoLower}
\P [\widehat{\rparam} \neq \Rparam] & \geq 1 - \frac{16 \ell T \err^2
  + \log 2}{\frac{d}{2}\log(2/\sqrt{e})},
\end{align}
where the probability is taken over both randomness in the oracle and
the choice of $\Rparam$.
\end{lemma}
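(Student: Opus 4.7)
The plan is to apply Fano's inequality to the uniform prior on the packing set $\packset$, reducing everything to a bound on the mutual information between $\Rparam$ and the sequence of observations $Y_{1:T}$ available to the test. Since $|\packset| \geq (2/\sqrt{e})^{\dim/2}$ by construction, Fano gives
\begin{equation*}
\P[\widehat{\rparam} \neq \Rparam] \; \geq \; 1 - \frac{I(\Rparam; Y_{1:T}) + \log 2}{\tfrac{\dim}{2}\log(2/\sqrt{e})},
\end{equation*}
so it suffices to show $I(\Rparam; Y_{1:T}) \leq 16 \ell T \err^2$.

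I would decompose $I(\Rparam; Y_{1:T})$ across rounds. Conditional on $\Rparam$, the per-round observations are independent (a uniformly random subset $S_t$ of size $\ell$ together with Bernoulli bits $(b_i)_{i \in S_t}$), so by the chain rule and the standard convexity bound for mutual information under a uniform prior,
\begin{equation*}
I(\Rparam; Y_{1:T}) \; \leq \; T \cdot \frac{1}{|\packset|^2}\sum_{\parama,\paramb \in \packset}\KL{\mprob_\parama}{\mprob_\paramb},
\end{equation*}
where $\mprob_\parama$ denotes the law of a single round's observation when the true vertex is $\parama$. Since the random index set $S_t$ is independent of $\Rparam$, conditioning on $S_t = S$ and averaging yields
\begin{equation*}
\KL{\mprob_\parama}{\mprob_\paramb} \; = \; \frac{\ell}{\dim}\sum_{i=1}^\dim \KL{\mathrm{Bern}(\tfrac{1}{2}+\parama_i\err)}{\mathrm{Bern}(\tfrac{1}{2}+\paramb_i\err)},
\end{equation*}
because each $i$ lies in $S$ with probability $\ell/\dim$.

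The main technical ingredient, then, is a single-coin KL bound: for $\err \in (0,1/4]$,
\begin{equation*}
\KL{\mathrm{Bern}(\tfrac{1}{2}+\err)}{\mathrm{Bern}(\tfrac{1}{2}-\err)} \; = \; 2\err \, \log\frac{1/2+\err}{1/2-\err} \; \leq \; 2\err \cdot \frac{2\err}{1/2-\err} \; \leq \; 16\err^2,
\end{equation*}
using $\log(1+x) \leq x$ and $1/2 - \err \geq 1/4$. Since $\Hamm(\parama,\paramb) \leq \dim$, summing over coordinates gives $\KL{\mprob_\parama}{\mprob_\paramb} \leq (\ell/\dim)\cdot \dim \cdot 16\err^2 = 16 \ell \err^2$, uniformly in $\parama,\paramb$. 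Plugging this into the mutual information bound yields $I(\Rparam;Y_{1:T}) \leq 16\ell T \err^2$, and combined with Fano this proves the claim.

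The only subtlety I anticipate is making sure the per-round KL identity correctly accounts for the revealed index set $S_t$: because $S_t$ is drawn independently of $\Rparam$, it carries no information about $\Rparam$ on its own, and conditioning on $S_t$ before taking KL is exactly what produces the $\ell/\dim$ factor. This is the step where Oracle A's structure (random single coordinate, $\ell=1$) versus Oracle B (all coordinates, $\ell = \dim$) is captured uniformly by the parameter $\ell$ in the statement.
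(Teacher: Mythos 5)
Your proof is correct and follows essentially the same route as the paper's: Fano's inequality with the uniform prior, tensorization of mutual information across the conditionally i.i.d.\ rounds, convexity to reduce to pairwise KL divergences, and the single-coin bound $\KL{\mathrm{Bern}(\tfrac12+\err)}{\mathrm{Bern}(\tfrac12-\err)} \leq 16\err^2$ for $\err \leq 1/4$. The only cosmetic difference is how the random index set is handled: the paper applies the chain rule $I((\Mysub,Y);\Rparam) = I(Y;\Rparam\mid\Mysub)$ and bounds the conditional KL by $\ell\cdot D(\err)$, whereas you compute $\KL{\mprob_\parama}{\mprob_\paramb}$ directly for the joint observation and extract the factor $\ell/\dim$ from the marginal inclusion probability of each coordinate---equivalent bookkeeping leading to the same $16\ell T\err^2$ bound.
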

\noindent Note that we will apply the lower bound~\eqref{EqnFanoLower}
with $\ell = 1$ in the case of Oracle A, and $\ell = \dim$ in the case
of Oracle B.
\begin{proof}
For each time $t = 1, 2, \ldots, T$, let $\Mysub_t$ denote the
randomly chosen subset of size $\ell$, $X_{t,i}$ be the outcome of
oracle's coin toss at time $t$ for coordinate $i$ and let $Y_t \in
\{-1, 0, 1\}^\dim$ be a random vector with entries
\begin{align*}
Y_{t,i} & = \begin{cases} X_{t,i} & \mbox{if $i \in \Mysub_t$, and} \\
-1 & \mbox{if $i \notin \Mysub_t$.}
	    \end{cases}
\end{align*}
By Fano's inequality~\cite{Cover}, we have the lower bound
\begin{align*}
\P [\widehat{\rparam} \neq \Rparam] & \geq 1 - \frac{ I(\{(\Mysub_t,
  Y_t\}_{t=1}^T; \Rparam) + \log 2}{\log |\packset|},
\end{align*}
where $I(\{(\Mysub_t, Y_t\}_{t=1}^T; \Rparam)$ denotes the mutual
information between the sequence $\{(\Mysub_t, Y_t)\}_{t=1}^T$ and the
random parameter vector $\Rparam$.  As discussed earlier, we are
guaranteed that $\log |\packset| \geq \frac{d}{2}\log(2/\sqrt{e})$.
Consequently, in order to prove the lower bound~\eqref{EqnFanoLower},
it suffices to establish the upper bound $I(\{\Mysub_t, Y_t\}_{t=1}^T;
\Rparam) \: \leq 16 T \: \ell \; \err^2$.

 By the independent and identically distributed nature of the sampling
 model, we have
\begin{align*}
I(((\Mysub_1, Y_1), \ldots, (\Mysub_T, Y_T)); \Rparam) & =
\sum_{t=1}^T I((\Mysub_t,Y_t); \Rparam) \; = \; T \; I((\Mysub_1,
Y_1); \Rparam),
\end{align*}
so that it suffices to upper bound the mutual information for a single
round.  To simplify notation, from here onwards we write $(Y, \Mysub)$
to mean the pair $(Y_1, \Mysub_1)$.  With this notation, the remainder
of our proof is devoted to establishing that $I(Y; \Mysub) \leq 16 \:
\ell \: \err^2$,

 By chain rule for mutual information~\cite{Cover}, we have
\begin{align}
\label{EqnChain}
I((\Mysub, Y); \Rparam) & = I(Y; \Rparam \mid \Mysub) + I(\Rparam;
\Mysub).
\end{align}
Since the subset $\Mysub$ is chosen independently of $\Rparam$, we
have $I(\Rparam; \Mysub) = 0$, and so it suffices to upper bound the
first term.  By definition of conditional mutual
information~\cite{Cover}, we have
\begin{align*}
I(Y; \Rparam \, \mid \, \Mysub) & = \Exs_{\Mysub} \big[ \KL{\mprob_{Y
      \mid \Rparam, \Mysub}}{\; \mprob_{Y \mid \Mysub}} \big]
\end{align*}
Since $\rparam$ has a uniform distribution over $\packset$, we have
$\mprob_{Y \mid \Mysub} = \frac{1}{|\packset|} \sum_{\rparam \in
  \packset} \mprob_{Y \mid \rparam, \Mysub}$, and convexity of the
Kullback-Leibler (KL) divergence yields the upper bound
\begin{align}
\label{EqnConvexBound}
\KL{\mprob_{Y \mid \Rparam, \Mysub}}{\; \mprob_{Y \mid \Mysub}} & \leq
\frac{1}{|\packset|} \sum_{\rparam \in \packset} \kull{\mprob_{Y \mid
    \Rparam, \Mysub}}{\; \mprob_{Y \mid \rparam, \Mysub}}.
\end{align}

Now for any pair $\Rparam, \rparam \in \packset$, the KL divergence
$\kull{\mprob_{Y \mid \Rparam, \Mysub}}{\; \mprob_{Y \mid \rparam,
    \Mysub}}$ can be at most the KL divergence between $\ell$
independent pairs of Bernoulli variates with parameters $\half + \err$
and $\half - \err$.  Letting $D(\err)$ denote the Kullback-Leibler
divergence between a single pair of Bernoulli variables with
parameters $\half + \err$ and $\half - \err$, a little calculation
yields
\begin{align*}
D(\err) & = \left(\half + \err\right) \log \frac{\half + \err}{\half -
  \err} +
\left(\half - \err\right) \log \frac{\half - \err}{\half + \err} \\
& = 2 \err \log \left(1 + \frac{4 \err}{1-2\err} \right) \\
& \leq \frac{8 \err^2}{1-2\err}.
\end{align*}
Consequently, as long as $\err \leq 1/4$, we have $D(\err) \leq 16
\err^2$.  Returning to the bound~\eqref{EqnConvexBound}, we conclude
that $\KL{\mprob_{Y \mid \Rparam, \Mysub}}{\; \mprob_{Y \mid \Mysub}}
\leq 16 \: \ell \: \err^2$.  Taking averages over $\Mysub$, we obtain
the bound \mbox{$I(Y; \Rparam \, \mid \, \Mysub) \leq 16 \; \ell \;
  \err^2$,} and applying the decomposition~\eqref{EqnChain} yields
$I((\Mysub, Y); \Rparam) \leq 16 \: \ell \: \err^2$, thereby
completing the proof.
\end{proof}


The reader might have observed that Fano's inequality yields a
non-trivial lower bound only when $|\packset|$ is large enough. Since
$|\packset|$ depends on the dimension $d$ for our construction, we can
apply the Fano lower bound only for $d$ large enough.  Smaller values
of $d$ can be lower bounded by reduction to the case $d = 1$; here we
state a simple lower bound for estimating the bias of a single coin,
which is a straightforward application of Le Cam's bounding
technique~\cite{LeCam1973,Yu97}.  In this special case, we have
$\packset = \left\{1/2 + \err, 1/2 - \err\right\}$, and we recall that
the estimator $\alhat(\meth_T)$ takes values in $\packset$.

\begin{lemma}
Given a sample size $T \geq 1$ and a parameter $\parama^* \in
\packset$, let $\{X_1, \ldots, X_T\}$ be $T$ i.i.d Bernoulli variables
with parameter $\parama^*$. Let $\alhat$ be any test function based on
these samples and returning an element of $\packset$.  Then for any
$\err \in (0,1/4]$, we have the lower bound
  \begin{align*}
    \sup_{\parama^* \in \{\half+\err,\half - \err\}}
    \P_{\parama^*}[\widehat{\rparam} \ne \parama^*] \geq 1 -\sqrt{8T\err^2}.
  \end{align*}
  \label{lemma:onecoin}
\end{lemma}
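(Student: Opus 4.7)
The plan is to follow the standard two-point (Le Cam) method for testing lower bounds, specializing the generic argument to the pair of Bernoulli product measures associated with the two candidate biases $\frac{1}{2}+\varepsilon$ and $\frac{1}{2}-\varepsilon$. Concretely, let $\mathbb{P}_+$ and $\mathbb{P}_-$ denote the product distributions on $\{0,1\}^T$ under which each $X_t$ is Bernoulli with parameter $\frac{1}{2}+\varepsilon$ and $\frac{1}{2}-\varepsilon$, respectively. The test $\widehat{\alpha}$ is a (possibly randomized) measurable function of the sample taking values in the two-element set $\{\frac{1}{2}+\varepsilon,\,\frac{1}{2}-\varepsilon\}$, so it corresponds to a measurable set $A=\{\widehat{\alpha}=\frac{1}{2}+\varepsilon\}$. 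Then the two error probabilities are $\mathbb{P}_+(A^c)$ and $\mathbb{P}_-(A)$, and I would first establish the clean identity
\[
\mathbb{P}_+(\widehat{\alpha}\neq \tfrac{1}{2}+\varepsilon) + \mathbb{P}_-(\widehat{\alpha}\neq \tfrac{1}{2}-\varepsilon) \;=\; 1 + \mathbb{P}_-(A) - \mathbb{P}_+(A) \;\geq\; 1 - \|\mathbb{P}_+ - \mathbb{P}_-\|_{\mathrm{TV}},
\]
using the variational definition of total variation distance. This is the version of Le Cam's inequality I would use; combined with the fact that the supremum is at least the average (equivalently, at least the sum minus one), it reduces the problem to bounding $\|\mathbb{P}_+-\mathbb{P}_-\|_{\mathrm{TV}}$.

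Next, I would invoke Pinsker's inequality in the form $\|\mathbb{P}_+-\mathbb{P}_-\|_{\mathrm{TV}} \leq \sqrt{\tfrac{1}{2}\,D(\mathbb{P}_+\,\|\,\mathbb{P}_-)}$ and then use the tensorization property of KL divergence for product measures:
\[
D(\mathbb{P}_+\,\|\,\mathbb{P}_-) \;=\; T\cdot D(\varepsilon),
\]
where $D(\varepsilon)$ is the KL between two single Bernoullis with parameters $\frac{1}{2}\pm\varepsilon$. At this point I would directly recycle the calculation already performed inside the proof of Lemma~\ref{LemFano}, which shows that $D(\varepsilon) \leq 16\varepsilon^2$ whenever $\varepsilon \leq 1/4$. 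Substituting, we obtain $\|\mathbb{P}_+-\mathbb{P}_-\|_{\mathrm{TV}} \leq \sqrt{8T\varepsilon^2}$.

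Plugging this into the Le Cam bound yields the claimed lower bound on $\sup_{\alpha^*}\mathbb{P}_{\alpha^*}[\widehat{\alpha}\neq\alpha^*]$ (up to the factor handled by passing from the sum of errors to the supremum). No step is technically deep: the only slightly subtle point is the bookkeeping in Le Cam's inequality, since one must be careful that the supremum (as opposed to the sum or the average) of the two error probabilities carries the correct constant. Given that this is precisely the one-dimensional counterpart of the multi-hypothesis Fano argument in Lemma~\ref{LemFano}, and the single-Bernoulli KL estimate has already been established there, I do not expect any step to present a genuine obstacle; the proof should be a few short lines.
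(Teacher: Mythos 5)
Your overall route matches the paper's: a two--point Le~Cam reduction, then Pinsker's inequality, then the Bernoulli KL estimate $D(\err)\le 16\err^2$ already computed in the proof of Lemma~\ref{LemFano}. The TV computation is correct, and recycling the KL bound is exactly what the paper does.

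The one genuine gap is in the step where you pass from the \emph{sum} of the two error probabilities to the \emph{supremum}. Your parenthetical ``the supremum is at least the average (equivalently, at least the sum minus one)'' conflates two non-equivalent facts, and the second one is useless here: if $a,b\in[0,1]$ then $\max(a,b)\ge a+b-1$ gives $\max(a,b)\ge (1-\mathrm{TV})-1=-\mathrm{TV}$, which is vacuous. The averaging route gives $\sup_{\alpha^*}\P_{\alpha^*}[\alhat\neq\alpha^*]\ge\tfrac12\bigl(1-\|\mprob_+-\mprob_-\|_{\mathrm{TV}}\bigr)$, which is a factor of $\tfrac12$ weaker than the stated bound; you acknowledge ``up to the factor handled by passing from the sum of errors to the supremum'' but never close it. The paper instead first records the identity $\Exs_{\parama^*}|\alhat-\parama^*|=2\err\,\P_{\parama^*}[\alhat\neq\parama^*]$ (valid because $\alhat$ is constrained to take values in the two-element packing set) and then applies Yu's Lemma~1 \emph{to the expected absolute error}, obtaining $\sup_{\parama^*}\Exs_{\parama^*}|\alhat-\parama^*|\ge 2\err\bigl(1-\|\qprob_1-\qprob_{-1}\|_1/2\bigr)$ before dividing by $2\err$. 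That identity, plus the explicit invocation of the Le~Cam lemma for the risk (rather than the raw testing error), is precisely the bookkeeping you flagged as ``slightly subtle'' but did not carry out. For a complete proof you should either reproduce that argument, or state and use the correct sum-to-sup inequality you intend (and then reconcile the resulting constant with the one claimed).
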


\begin{proof}
We observe first that for $\alhat \in \packset$,
$\E_{\parama^*}[|\alhat - \parama^*|] = 2\err\P_{\parama^*} [\alhat
  \ne \parama^*]$, so that it suffices to lower bound the expected
error. To ease notation, let $\qprob_1$ and $\qprob_{-1}$ denote the
probability distributions indexed by $\parama = \frac{1}{2} + \err$
and $\parama = \frac{1}{2} - \err$ respectively. By Lemma 1 of
Yu~\cite{Yu97}, we have
  \begin{align*}
    \sup_{\parama^* \in \packset} \E_{\parama^*} [|\alhat -
      \parama^*|] \geq 2 \err \Big \{ 1 - \|\qprob_1 -
\qprob_{-1}\|_1/2 \Big \}.
  \end{align*}
where we use the fact that $|(1/2 + \err) - (1/2 - \err)| = 2\err$.
Thus, we need to upper bound the total variation distance $\|\qprob_1
- \qprob_{-1}\|_1$.  From Pinkser's inequality~\cite{Cover}, we have
\begin{align*}
\|\qprob_1 - \qprob_{-1}\|_1 & \leq \sqrt{2
  \KL{\qprob_1}{\qprob_{-1}}} \; \stackrel{(i)}{\leq} \sqrt{32T\err^2},
\end{align*}
where inequality (i) follows from the calculation following
Equation~\ref{EqnConvexBound} (see proof of Lemma~\ref{LemFano}), and
uses our assumption that $\err \in (0,1/4]$.  Putting together the
  pieces, we obtain a lower bound on the probability of error
  \begin{align*}
    \sup_{\parama^* \in \packset} \P[\alhat \ne \parama^*] \;=\;
    \sup_{\parama^* \in \packset} \frac{\E|\alhat - \parama^*|}{2\err}
    \; \geq \; 1 - \sqrt{8T\err^2},
  \end{align*}  
as claimed.
\end{proof}

\noindent Equipped with these tools, we are now prepared to prove our
main results.

\subsection{Proof of Theorem~\ref{ThmConvex}}
\label{SecProofConvex}

We begin with oracle complexity for bounded Lipschitz functions, as
stated in Theorem~\ref{ThmConvex}.  We first prove the result for the
set $\ConvSet = \Ball_\infty(\frac{1}{2})$.

\paragraph{Part (a)---Proof for $\pval \in [1,2]$:}
Consider Oracle A that returns the quantities $(\oracfuna{\parama}(x),
\oracsuba{\parama}(x))$.  By definition of the oracle, each round
reveals only at most one coin flip, meaning that we can apply
Lemma~\ref{LemFano} with $\ell = 1$, thereby obtaining the lower bound
\begin{equation}
  \P[\esta{\meth_T} \ne \parama] \geq 1 - 2\frac{16T \err^2 + \log
    2}{\dim \log(2/\sqrt{e})}.
  \label{eqn:lowerbound}
\end{equation}

We now seek an upper bound $\P[\esta{\meth_T} \not= \parama]$ using
Lemma~\ref{lemma:redn}.  In order to do so, we need to specify the
base functions $(\fp_i, \fn_i)$ involved.  For $i=1, \ldots, \dim$, we
define
\begin{equation}
\label{EqnPrecedingFPFN}
  \fp_i(x) \defn \left|\x{i} + \half \right|, \quad \mbox{and} ~\quad
  \fn_i(x) \defn \left | \x{i} - \half \right|.
\end{equation}
Given that $\ConvSet = \Ball_\infty(\frac{1}{2})$, we see that the
minimizers of $g_\parama$ are contained in $S$. Also, both the
functions are 1-Lipschitz in the $\ell_1$-norm.  By the
construction~\eqref{EqnGClassFun}, we are guaranteed that for any
subgradient of $g_\parama$, we have
\begin{equation*}
\|\oracsuba{\parama}(x)\|_\pval \leq 2 c \qquad \mbox{for all $\pval
  \geq 1$.}
\end{equation*}
Therefore, in order to ensure that $g_\parama$ is $L$-Lipschitz in the
dual $\ell_\qpar$-norm, it suffices to set $c = L/2$.

 Let us now lower bound the discrepancy
 function~\eqref{EqnDefnDisc}. We first observe that each function
 $g_\parama$ is minimized over the set
 $\Ball_\infty\big(\frac{1}{2}\big)$ at the vector $x_\alpha \defn
 -\alpha /2$, at which point it achieves its minimum value
\begin{equation*}
  \min_{x \in \Ball_\infty (\frac{1}{2})} g_\parama(x) = \frac{c}{2}
  - c \delta.
\end{equation*}
Furthermore, we note that for any $\parama \ne \paramb$, we have
\begin{align*}
  g_\parama(x) + g_\paramb(x) &= \frac{c}{d} \sum_{i=1}^d \left
  [\left(\half + \parama_i \err+ \half + \paramb_i\err\right)\fp_i(x) +
    \left(\half - \parama_i \err+ \half - \paramb_i\err\right) \fn_i(x) \right] \\
& =   \frac{c}{d}\sum_{i=1}^d\left[\left(1 + \parama_i \err+
    \paramb_i \err \right)\fp_i(x) + \left(1 - \parama_i\err -
    \paramb_i\err\right)\fn_i(x)\right] \\ 
& = \frac{c}{d}\sum_{i=1}^d\left [ \left(\fp_i(x) + \fn_i(x)\right)
 \ind(\parama_i \ne \paramb_i) + \left((1 + 2\parama_i\err)\fp_i(x) +
  (1-2\parama_i\err)\fn_i(x)\right) \ind(\parama_i = \paramb_i) \right].
\end{align*}
When $\parama_i = \paramb_i$ then $x_\parama(i) = x_\paramb(i) =
-\parama_i/2$, so that this co-ordinate does not make a contribution
to the discrepancy function $\disc(g_\parama, g_\paramb)$. On the
other hand, when $\parama_i \ne \paramb_i$, we have
\begin{equation*}
  \fp_i(x) + \fn_i(x) = \left|x(i) + \half\right| + \left|x(i) -
  \half\right| \geq 1 \quad \mbox{for all $x \in \real$.}
\end{equation*}
Consequently, any such co-ordinate yields a contribution of $2 c \err/d$
to the discrepancy. Recalling our packing
set~\eqref{EqnPackingDistance} with $\dim/4$ separation in Hamming
norm, we conclude that for any distinct $\parama \neq \paramb$ within
our packing set,
\begin{equation*}
\disc(g_\parama,g_\paramb) = \frac{2c\err}{\dim} \;
\Hamm(\parama,\paramb) \geq \frac{c\err}{2},
\end{equation*}
so that by definition of $\errfun$, we have established the lower
bound $\errfun(\err) \geq \frac{c\err}{2}$.  

Setting the target error $\acc \defn \frac{c \err}{18}$, we observe
that this choice ensures that $\acc <
\frac{\errfun(\err)}{9}$. Recalling the requirement $\err < 1/4$, we
have $\acc < c/72$. In this regime, we may apply
Lemma~\ref{lemma:redn} to obtain the upper bound
$\P_\truth[\esta{\meth_T} \neq \parama] \leq \frac{1}{3}$.  Combining
this upper bound with the lower bound~\eqref{eqn:lowerbound} yields
the inequality
\begin{align*}
\frac{1}{3} & \geq 1 - 2\frac{16T \err^2 + \log 2}{\dim
  \log(2/\sqrt{e})}.
\end{align*}
Recalling that $c = \frac{L}{2}$, making the substitution $\err =
\frac{18 \acc}{c} = \frac{36 \acc}{L}$, and performing some algebra
yields
\begin{align*}
T & \geq c_0
\frac{L^2}{\epsilon^2}\left(\frac{d}{3}\log\left(\frac{2}{\sqrt{e}}\right)
- \log 2\right) \geq c_1\, \frac{L^2\dim}{\acc^2} \quad \mbox{ for all
  $\dim \geq 11$ and for all}~\epsilon \leq \frac{L}{144},
\end{align*}
where $c_0$ and $c_1$ are universal constants. Combined with
Theorem~5.3.1 of NY~\cite{yudin83book} (or by using the lower bound of
Lemma~\ref{lemma:onecoin} instead of Lemma~\ref{LemFano}), we conclude
that this lower bound holds for all \mbox{dimensions $\dim$.}


\paragraph{Part (b)---Proof for $\pval > 2$:}
The preceding proof based on Oracle A is also valid for $\pval > 2$,
but yields a relatively weak result.  Here we show how the use of
Oracle B yields the stronger claim stated in
Theorem~\ref{ThmConvex}(b).  When using this oracle, all $\dim$ coin
tosses at each round are revealed, so that Lemma~\ref{LemFano} with
$\ell=\dim$ yields the lower bound
\begin{equation}
\label{eqn:lowerboundfulld}
\P[\esta{\meth_T} \ne \parama] \geq 1 - 2\frac{16 \, T \, \dim \,
  \err^2 + \log 2}{\dim \log(2/\sqrt{e})}.
\end{equation}

We now seek an upper bound on $\P[\esta{\meth_T} \neq \parama]$.  As
before, we use the set $\ConvSet = \Ball_\infty(\frac{1}{2})$, and the
previous definitions~\eqref{EqnPrecedingFPFN} of $\fp_i(x)$ and
$\fn_i(x)$.  From our earlier analysis (in particular,
equation~\eqref{eqn:fulldnorm}), the quantity
$\|\oracsubb{\parama}(x)\|_p$ is at most $c\dim^{1/\lipexp-1}$, so that
setting $c = L \dim^{1-1/\lipexp}$ yields functions that are Lipschitz
with parameter $L$.

As before, for any distinct pair $\parama, \paramb \in \packset$, we
have the lower bound
\begin{align*}
\disc(g_\parama,g_\paramb) & = \frac{2c\err}{\dim} \;
\Hamm(\parama,\paramb) \; \geq \; \frac{c\err}{2},
\end{align*}
so that $\errfun(\err) \geq \frac{c \err}{2}$.  Consequently, if we
set the target error $\acc \defn \frac{c \err}{18}$, then we are
guaranteed that $\acc < \frac{\errfun(\err)}{9}$, as is required for
applying Lemma~\ref{lemma:redn}.  Application of this lemma yields the
upper bound $\P_\truth[\esta{\meth_T} \ne \parama] \leq \frac{1}{3}$.
Combined with the lower bound~\eqref{eqn:lowerboundfulld}, we obtain
the inequality
\begin{align*}
\frac{1}{3} & \geq 1 - 2\frac{16\, \dim \: T \: \err^2 + \log 2}{\dim
\log(2/\sqrt{e})}.
\end{align*}
Substituting $\err = 18\acc/c$ yields the scaling $\acc \geq c_0\,
\frac{c}{\sqrt{T}}$ for all $\dim \geq 11$, $\acc \leq c/72$ and a
universal constant $c_0$.  Recalling that $c = L \dim^{1-1/\lipexp}$,
we obtain the bound~\eqref{EqnConvLipG2}.  Combining this with
Theorem~5.3.1 of NY~\cite{yudin83book} (or by using the lower bound of
Lemma~\ref{lemma:onecoin} instead of Lemma~\ref{LemFano}) gives the
claim for all dimensions.

\vspace*{.2in}

We have thus completed the proof of Theorem~\ref{ThmConvex} in the
special case $\ConvSet = \Ball_\infty(\frac{1}{2})$.  In order to
prove the general claims, which scale with $\inrad$ when
$B_\infty(\inrad)\subseteq \ConvSet$, we note that our preceding proof
required only that $\ConvSet \supseteq \Ball_\infty(\frac{1}{2})$ so
that the minimizing points $x_\parama = -\parama/2 \in \ConvSet$ for
all $\parama$ (in particular, the Lipschitz constant of $g_\parama$
does not depend on $\ConvSet$ for our construction). In the general
case, we define our base functions to be
\begin{align*}
  \fp_i(x) = \left|\x{i} + \frac{\inrad}{2}\right|,\quad \mbox{ and }
  \quad \fn_i(x) = \left|\x{i} - \frac{\inrad}{2}\right|.
\end{align*}
With this choice, the functions $g_\parama(x)$ are minimized at
$x_\parama = -\inrad\parama/2$, and $\inf_{x \in \ConvSet}
g_\parama(x) = cd/2 - c\inrad\err$. Mimicking the previous steps with
$\inrad = 1/2$, we obtain the lower bound
\begin{equation*}
  \disc(g_\parama, g_\paramb) \geq \frac{c\inrad\err}{2}\quad\forall
  \parama \ne \paramb \in \packset.
\end{equation*}
The rest of the proof above did not depend on $\ConvSet$, so that we
again obtain the lower bound $T \geq c_0\, \frac{\dim}{\err^2}$ or $T
\geq \frac{c_0}{\err^2}$ depending on the oracle used, for a
universal constant $c_0$. In this case, the difference in $\disc$
computation means that $\acc = \frac{L \err\inrad}{36} \leq
\frac{L\inrad}{144} $, from which the general claims follow.


\subsection{Proof of Theorem~\ref{ThmStrong}}

We now turn to the proof of lower bounds on the oracle complexity of
the class of strongly convex functions from
Definition~\ref{DefnStrongConvex}. In this case, we work with the
following family of base functions, parametrized by a scalar $\theta
\in [0, 1)$:
\begin{equation}
\label{eqn:base-strong}
  \fp_i(x) = \inrad\theta|\x{i}+\inrad| +
  \frac{(1-\theta)}{4}\left(\x{i} + \inrad\right)^2,~ \quad \mbox{and}
  \quad \fn_i(x) = \inrad\theta|\x{i}-\inrad| +
  \frac{(1-\theta)}{4}\left(\x{i} - \inrad \right)^2.
\end{equation}
A key ingredient of the proof is a uniform lower bound on the
discrepancy $\rho$ between pairs of these functions:
\begin{lemma}
\label{lemma:separation-strong}
Using an ensemble based on the base functions~\eqref{eqn:base-strong},
we have
\begin{align}
\rho(g_\parama, g_\paramb) & \geq \begin{cases} \frac{2 c
    \err^2\inrad^2}{(1-\theta)d} \; \Hamm(\parama, \paramb) & \mbox{if
    $1 - \theta \geq \frac{4\err}{1+2\err}$} \\
\frac{c\err\inrad^2}{d} \; \Hamm(\parama, \paramb) & \mbox{if
  $1-\theta < \frac{4\err}{1+2\err}$.}
\end{cases}
\end{align}
\end{lemma}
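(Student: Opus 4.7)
The plan is to exploit coordinate-wise separability. Since each base function $\fp_i$ and $\fn_i$ depends only on the $i$-th coordinate $\x{i}$, the functions $g_\parama$ and $g_\paramb$ each decompose as sums of $d$ univariate functions, as does $g_\parama + g_\paramb$; the infimum over $\ConvSet = \Ball_\infty(\inrad)$ then separates coordinate-wise. Consequently $\disc(g_\parama, g_\paramb)$ reduces to a sum of $d$ per-coordinate discrepancies. For coordinates with $\parama_i = \paramb_i$ the two per-coordinate pieces of $g_\parama$ and $g_\paramb$ coincide, so their contribution vanishes; all the work lies in the $\Hamm(\parama, \paramb)$ coordinates where $\parama_i \ne \paramb_i$.

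Fix such a coordinate and use the symmetry $\fn_i(y) = \fp_i(-y)$ to assume $\parama_i = +1$, $\paramb_i = -1$. Define the per-coordinate pieces (omitting the global prefactor $c/d$) as $u^+(y) = (1/2+\err)\fp_i(y) + (1/2-\err)\fn_i(y)$ and $u^-(y) = u^+(-y)$. Then $u^+ + u^- = \fp_i + \fn_i$ is even in $y$ and minimized on $[-\inrad, \inrad]$ at $y = 0$ with value $\inrad^2(1+3\theta)/2$, while by symmetry $u^+(y^*_+) = u^-(y^*_-)$. Hence the per-coordinate discrepancy is
\begin{equation*}
\rho_i \;=\; \inrad^2(1+3\theta)/2 \;-\; 2\,u^+(y^*_+).
\end{equation*}
Setting $(u^+)'(y) = 0$ in the smooth region $(-\inrad, \inrad)$ yields a unique stationary point $y^*_+ = -\tau\inrad$ with $\tau = 2\err(1+\theta)/(1-\theta)$. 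The case split in the lemma is exactly the dichotomy $\tau \leq 1$ versus $\tau > 1$, equivalent to $1-\theta \geq 4\err/(1+2\err)$ or not, and it records whether the stationary point lies in the interior or is pushed to a kink.

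In the interior case ($\tau \leq 1$), I would substitute $y^*_+ = -\tau\inrad$ into $u^+$, using $|y^*_+ + \inrad| = \inrad(1-\tau)$ and $|y^*_+ - \inrad| = \inrad(1+\tau)$, and expand the linear (coefficient $\inrad\theta$) and quadratic (coefficient $(1-\theta)/4$) terms separately. After the linear and $(1+3\theta)/2$ pieces cancel, the remainder collapses, via the identity $\tau = 2\err(1+\theta)/(1-\theta)$, to $\rho_i = 2\err^2(1+\theta)^2\inrad^2/(1-\theta)$, which is at least $2\err^2\inrad^2/(1-\theta)$ since $(1+\theta)^2 \geq 1$. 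In the kink case ($\tau > 1$), computing the one-sided derivatives of $u^+$ at $-\inrad$ shows the right derivative is nonnegative precisely when $1-\theta < 4\err/(1+2\err)$, so the constrained minimizer is $y^*_+ = -\inrad$; using $\fp_i(-\inrad) = 0$ and $\fn_i(-\inrad) = \inrad^2(1+\theta)$ gives $u^+(-\inrad) = (1/2-\err)\inrad^2(1+\theta)$, from which $\rho_i = \inrad^2[\,2\err(1+\theta) - (1-\theta)/2\,]$. Combining this with the case hypothesis $\theta > (1-2\err)/(1+2\err)$ and the standing assumption $\err \leq 1/4$ yields $\rho_i \geq \err\inrad^2$. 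Summing over the differing coordinates and reinstating the $c/d$ prefactor gives both inequalities.

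The main obstacle is the algebraic simplification in the interior case, where the linear-in-$\theta$ and quadratic-in-$\tau$ terms must cancel cleanly before the $(1+\theta)^2/(1-\theta)$ structure emerges; the kink case is routine once one verifies that the threshold $4\err/(1+2\err)$ appearing in the lemma is exactly the one singled out by the first-order condition at $y = -\inrad$.
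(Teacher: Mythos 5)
Your proof is correct and follows essentially the same route as the paper: exploit coordinate separability, expand $\fp_i,\fn_i$ as quadratics on $[-\inrad,\inrad]$, split on whether the unconstrained per-coordinate minimizer lies inside the interval (equivalently $\tau = 2\err(1+\theta)/(1-\theta)\leq 1$, i.e.\ $1-\theta\geq 4\err/(1+2\err)$), and compute explicitly in each case. Your one refinement is to use the reflection symmetry $\fn_i(y)=\fp_i(-y)$, hence $u^-(y)=u^+(-y)$, so the per-coordinate discrepancy becomes $\min(u^++u^-)-2\min u^+$ with $u^++u^-$ even and minimized at $0$; this collapses the paper's separate computations of $\inf g_\parama$ and $\inf(g_\parama+g_\paramb)$ into one step and is a modest streamlining rather than a different method. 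All intermediate values check out: $\min(u^++u^-)=\frac{1+3\theta}{2}\inrad^2$, the interior case gives $\rho_i=\frac{2\err^2(1+\theta)^2\inrad^2}{1-\theta}\geq\frac{2\err^2\inrad^2}{1-\theta}$, the boundary case gives $\rho_i=\inrad^2\bigl[2\err(1+\theta)-\tfrac{1-\theta}{2}\bigr]\geq\err(1+\theta)\inrad^2\geq\err\inrad^2$ (the $\err\leq 1/4$ hypothesis you invoke there is not actually needed for this last step, only $\theta\geq0$), and summing over the $\Hamm(\parama,\paramb)$ differing coordinates and reinstating the prefactor $c/d$ yields both stated bounds.
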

\noindent The proof of this lemma is provided in
Appendix~\ref{app:separation-strong}.  Let us now proceed to the
proofs of the main theorem claims.


\paragraph{Part (a)---Proof for $\pval = 1$:} 
We observe that both the functions $\fp_i, \fn_i$ are
$\inrad$-Lipschitz with respect to the $\norm{\cdot}_1$ norm by
construction. Hence, $g_\parama$ is $c\inrad$-Lipschitz and
furthermore, by the definition of Oracle A, we have
\mbox{$\E\norm{\oracsuba{\parama}(x)}_1^2 \leq c^2\inrad^2$.}  In
addition, the function $g_\parama$ is $(1-\theta)c/(4d)$-strongly
convex with respect to the Euclidean norm.  We now follow the same
steps as the proof of Theorem~\ref{ThmConvex}, but this time
exploiting the ensemble formed by the base
functions~\eqref{eqn:base-strong}, and the lower bound on the
discrepancy $\disc(g_\parama, g_\paramb)$ from
Lemma~\ref{lemma:separation-strong}.  We split our analysis into two
sub-cases.\\

\noindent \emph{Case 1:} First suppose that $1-\theta \geq
4\err/(1+2\err)$, in which case Lemma~\ref{lemma:separation-strong}
yields the lower bound
\begin{align*}
\disc(g_\parama, g_\paramb) & \; \geq \; \frac{2 c
  \err^2\inrad^2}{(1-\theta)d} \Hamm(\parama, \paramb) \;
\stackrel{(i)}{\geq} \frac{c \err^2 \inrad^2}{2 (1-\theta)} \quad
\forall \parama \neq \paramb \in \packset,
\end{align*}
where inequality (i) uses the fact that $\Hamm(\parama, \paramb) \geq
d/4$ by definition of $\packset$. Hence by definition of $\errfun$, we
have established the lower bound $\errfun(\err) \geq
\frac{c\err^2\inrad^2}{2(1-\theta)}$. Setting the target error $\acc
\defn c\err^2\inrad^2/(18(1-\theta))$, we observe that this ensures
$\acc \leq \errfun(\err)/9$. Recalling the requirement $\err < 1/4$,
we note that \mbox{$\acc < c\inrad^2/(288(1-\theta))$.} In this
regime, we may apply Lemma~\ref{lemma:redn} to obtain the upper bound
$\P_\truth[\esta{\meth_T} \neq \parama] \leq \frac{1}{3}$. Combining
this upper bound with the lower bound~\eqref{EqnFanoLower} yields the
inequality
\begin{align*}
  \frac{1}{3} & \geq 1 - 2\frac{16T\err^2 + \log 2}{d\log(2/\sqrt{e})}
  \geq 1 - 2\frac{\frac{288T\acc(1-\theta)}{c\inrad^2} + \log
    2}{d\log(2/\sqrt{e})}. 
\end{align*}
Simplifying the above expression yields that for $d \geq 11$, we have
the lower bound
\begin{equation}
  T \geq c\inrad^2\left(\frac{\frac{d}{3}\log(2/\sqrt{e}) - \log
    2}{288\acc(1-\theta)}\right) \geq
  c\inrad^2\frac{d\log(2/\sqrt{e})}{28800\epsilon(1-\theta)}.  
  \label{eqn:strongnearlb}
\end{equation}
Finally, we observe that $L = c\inrad$ and $\strongcon^2 =
(1-\theta)c/(4d)$ which gives $1 - \theta =
4d\inrad\strongcon^2/L$. Substituting the above relations in the lower
bound~\eqref{eqn:strongnearlb} gives the first term in the stated
result for $d \geq 11$.

 To obtain lower bounds for dimensions $d < 11$, we use an argument
 based on $d = 1$.  For this special case, we consider $\fp$ and $\fn$
 to be the two functions of the single coordinate coming out of
 definition~\eqref{eqn:base-strong}. The packing set $\packset$
 consists of only two elements now, corresponding to $\parama = 1$ and
 $\parama = -1$. Specializing the result of
 Lemma~\ref{lemma:separation-strong} to this case, we see that the two
 functions are $2c\err^2\inrad^2/(1-\theta)$ separated. Now we again
 apply Lemma~\ref{lemma:redn} to get an upper bound on the error
 probability and Lemma~\ref{lemma:onecoin} to get a lower bound, which
 gives the result for $d \leq 11$. \\

\noindent \emph{Case 2:} On the other hand, suppose that $1-\theta
\leq 4\err/(1+2\err)$. In this case, appealing to
Lemma~\ref{lemma:separation-strong} gives us that $\disc(g_\parama,
\paramb) \geq c\err\inrad^2/4$ for $\parama \ne \paramb \in
\packset$. Recalling that $L = c\inrad$, we set the desired accuracy
$\acc \defn c\err\inrad^2/36 = L\err\inrad/36$.  From this point
onwards, we mimic the proof of Theorem~\ref{ThmConvex}; doing so
yields that for all $\err \in (0, 1/4)$, we have
\begin{equation*}
T \geq c_0\, \frac{d}{\err^2} = c_0\, \frac{L^2d\inrad^2}{\acc^2},
\end{equation*}
corresponding to the second term in Theorem~\ref{ThmConvex} for a
universal constant $c_0$. \\

\noindent Finally, the third and fourth terms are obtained just like
Theorem~\ref{ThmConvex} by checking the condition $\err < 1/4$ in the
two cases above.  Overall, this completes the proof for the case $p =
1$.


\paragraph{Part (b)---Proof for $\pval > 2$:}

As with the proof of Theorem~\ref{ThmConvex}(b), we use Oracle B that
returns $d$-dimensional values and gradients in this case, with the
base functions defined in equation~\ref{eqn:base-strong}. With this
choice, we have the upper bound
\begin{align*}
\E\|\oracsubb{\parama}(x)\|_p^2 & \leq c^2\dim^{2/\lipexp-2}r^2,
\end{align*}
so that setting the constant $c = L \dim^{1-1/\lipexp}/\inrad$ ensures
that $\E\|\oracsubb{\parama}(x)\|_p^2 \leq L^2$.  As before, we have
the strong convexity parameter
\begin{align*}
\strongcon^2 = \frac{c(1-\theta)}{4d} =
\frac{Ld^{-1/\lipexp}(1-\theta)}{4\inrad},
\end{align*}
Also $\disc(g_\parama, g_\paramb)$ is given by
Lemma~\ref{lemma:separation-strong}. In particular, let us consider
the case $1-\theta \geq 4\err/(1+2\err)$ so that $\errfun(\err) \geq
\frac{c\err^2\inrad^2}{2(1-\theta)}$ and we set the desired accuracy
$\acc \defn \frac{c\err^2\inrad^2}{18(1-\theta)}$ as before. With this
setting of $\acc$, we invoke Lemma~\ref{lemma:redn} as before to argue
that $\P_\truth[\esta{\meth_T} \neq \parama] \leq \frac{1}{3}$. To
lower bound the error probability, we appeal to Lemma~\ref{LemFano}
with $\ell=d$ just like Theorem~\ref{ThmConvex}(b) and obtain the
inequality
\begin{align*}
\frac{1}{3} & \geq 1 - 2\frac{16\, \dim \: T \: \err^2 + \log 2}{\dim
\log(2/\sqrt{e})}.
\end{align*}

Rearranging terms and substituting $\acc =
\frac{c\err^2\inrad^2}{18(1-\theta)}$, we obtain for $d \geq 11$ 
\begin{align*}
  T \geq c_0\, \left(\frac{1}{\delta^2}\right) = c_0\,
  \left(\frac{c\inrad^2}{\acc(1-\theta)}\right), 
\end{align*}
for a universal constant $c_0$. The stated result can now be attained
by recalling $c=L\dim^{1-1/\lipexp}/\inrad$ and $\strongcon^2 =
L\dim^{-1/\lipexp}(1-\theta)/\inrad$ for $1-\theta \geq
4\err/(1+2\err)$ and $d \geq 11$. For $d < 11$, the cases of $\pval >
2$ and $\pval = 1$ are identical up to constant factors in the lower
bounds we state. This completes the proof for $1-\theta \geq
4\err/(1+2\err)$.

Finally, the case for $1-\theta < 4\err/(1+2\err)$ involves similar
modifications as part(a) by using the different expression for
$\disc(g_\parama, g_\paramb)$. Thus we have completed the proof of
this theorem.

\subsection{Proof of Theorem~\ref{ThmSparse}}
\label{SecProofSparse}

We begin by constructing an appropriate subset of $\Fsparse(\kdim)$
over which the Fano method can be applied.  Let $\packset(\kdim) \defn
\{ \parama^1, \ldots, \parama^M \}$ be a set of vectors, such that
each $\parama^j \in \{-1, 0, +1\}^\dim$ satisfies
\begin{equation*}
\|\parama^j\|_0 = \kdim \quad \mbox{for all $j = 1, \ldots, M$,$\:$
  and} \qquad \Hamm(\parama^j, \parama^\ell) \geq \frac{\kdim}{2}
\quad \mbox{for all $j \neq \ell$.}
\end{equation*}
It can be shown that there exists such a packing set with
$|\packset(\kdim)| \geq \exp
\big(\frac{\kdim}{2}\log\frac{d-\kdim}{\kdim/2} \big)$ elements (e.g.,
see Lemma 5 in Raskutti et al.~\cite{raskutti2009minimax}).

For any $\parama \in \packset(\kdim)$, we define the function
\begin{align}
\label{eqn:gdef}
 g_\parama(x) & \defn c \left[\sum_{i=1}^d \left\{\left(\half +
   \parama_i \err \right) \right| \x{i} + \inrad\left| + \left(\half -
   \parama_i \err \right) \left|\x{i}-\inrad \right| \right\} +
   \err\sum_{i=1}^d|\x{i}|\right].
\end{align}
In this definition, the quantity $c > 0$ is a pre-factor to be chosen
later, and $\delta \in (0, \frac{1}{4}]$ is a given error tolerance.
Observe that each function $g_\parama \in \G(\err; \kdim)$ is
  convex, and Lipschitz with parameter $c$ with respect to the
  $\|\cdot\|_\infty$ norm.

Central to the remainder of the proof is the function class $\G(\err;
\kdim) \defn \{g_\parama, \; \parama \in \packset(\kdim) \}$.  In
particular, we need to control the discrepancy $\errfun(\err; \kdim)
\defn \errfun(\G(\err; \kdim))$ for this class.  The following result,
proven in Appendix~\ref{AppLemRhoSparse}, provides a suitable lower
bound:
\begin{lemma}
\label{LemRhoSparse}
We have
 \begin{align}
\errfun(\err; \kdim) \; = \; \inf_{\parama \ne \paramb \in
  \packset(\kdim)} \disc(g_\parama,g_\paramb) & \geq \frac{c \kdim
  \err\inrad}{4}.
 \end{align}
\end{lemma}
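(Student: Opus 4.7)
The key observation is that each $g_\parama$ is separable across coordinates: writing $g_\parama(x) = \sum_{i=1}^d h_{\parama_i}(x(i))$, where for $a \in \{-1,0,+1\}$ the single-coordinate function is
\begin{equation*}
h_a(t) \; \defn \; c\left[(\tfrac{1}{2}+a\err)|t+\inrad| + (\tfrac{1}{2}-a\err)|t-\inrad| + \err|t|\right].
\end{equation*}
Separability implies that the discrepancy also decomposes coordinatewise: $\disc(g_\parama, g_\paramb) = \sum_{i=1}^d \Delta(\parama_i, \paramb_i)$, where $\Delta(a,b) \defn \min_t\bigl[h_a(t)+h_b(t)\bigr] - \min_t h_a(t) - \min_t h_b(t)$. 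Coordinates with $\parama_i = \paramb_i$ contribute zero, so only coordinates where $\parama_i \neq \paramb_i$ matter.

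The plan is to compute $\min h_a$ and $\min(h_a+h_b)$ for all relevant pairs by exploiting piecewise linearity. First I would handle the three possible single-coordinate minima: $h_0$ is symmetric and minimized at $t=0$ with value $c\inrad$, while $h_1$ is minimized at $t=-\inrad$ (and $h_{-1}$ at $t=\inrad$) with value $c\inrad(1-\err)$. A useful by-product is that each minimizer $x_\parama$ has exactly $\|\parama\|_0 = \kdim$ non-zero coordinates, verifying that $g_\parama \in \Fsparse(\kdim)$.

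Next I would compute the three nontrivial combined minima. For $\{a,b\}=\{1,-1\}$, the coefficients of $|t+\inrad|$ and $|t-\inrad|$ sum to $1$ each, so $h_1+h_{-1} = c[|t+\inrad|+|t-\inrad|+2\err|t|]$, which on $[-\inrad,\inrad]$ equals $c[2\inrad + 2\err|t|]$ and attains minimum $2c\inrad$ at $t=0$; thus $\Delta(1,-1) = 2c\inrad - 2c\inrad(1-\err) = 2c\inrad\err$. For $\{a,b\}=\{0,1\}$, a similar piecewise-linear analysis on $[-\inrad,0]$, $[0,\inrad]$, and the exterior rays shows $\min(h_0+h_1) = 2c\inrad$, giving $\Delta(0,1) = 2c\inrad - c\inrad - c\inrad(1-\err) = c\inrad\err$; by symmetry $\Delta(0,-1)=c\inrad\err$.

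Combining, each coordinate where $\parama_i \neq \paramb_i$ contributes at least $c\inrad\err$ to $\disc(g_\parama,g_\paramb)$, so $\disc(g_\parama,g_\paramb) \geq c\inrad\err \cdot \Hamm(\parama,\paramb) \geq c\inrad\err \cdot \kdim/2$ by the packing guarantee, which is stronger than the claimed bound $c\kdim\err\inrad/4$. The main (but routine) obstacle is simply keeping track of the piecewise-linear expressions in each of the three cases without sign errors; once the single-coordinate computations are carried out, the decomposition and the Hamming lower bound on $\packset(\kdim)$ immediately finish the proof.
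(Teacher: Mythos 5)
Your proposal is correct and follows essentially the same route as the paper's proof: you exploit the coordinate-separability of $g_\parama$, compute the single-coordinate minima of $h_a$ and $h_a + h_b$ over $[-\inrad,\inrad]$, and then lower bound the discrepancy by $c\inrad\err\,\Hamm(\parama,\paramb)$ and invoke the packing guarantee $\Hamm \geq \kdim/2$. The only difference is cosmetic — you compute the per-coordinate contribution $\Delta(\parama_i,\paramb_i)$ and sum, whereas the paper computes the full infima $\inf g_\parama$ and $\inf(g_\parama + g_\paramb)$ and then subtracts — and both arrive at the same (slightly stronger) bound $c\kdim\err\inrad/2$.
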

\noindent Using Lemma~\ref{LemRhoSparse}, we may complete the proof of
Theorem~\ref{ThmSparse}.  Define the base functions
\begin{equation*}
\fp_i(x) \, \defn \, d\left(\left|\x{i} + \inrad \right| + \err
|\x{i}|\right), \quad \mbox{ and } \quad \fn_i(x) \defn d\left(\left|
\x{i} - \inrad \right| + \err |\x{i}|\right).
\end{equation*}
Consider Oracle B, which returns $\dim$-dimensional gradients based on
the function 
\begin{equation*}
\oracfunb{\parama}(x) = \frac{c}{\dim}\sum_{i=1}^\dim
\big[b_i\fp_i(x) + (1-b_i)\fn_i(x) \big],
\end{equation*}
where $\{b_i\}$ are Bernoulli variables.  By construction, the
function $\oracfunb{\parama}$ is at most $3c$-Lipschitz in
$\ell_\infty$ norm (i.e. $\|\oracsubb{\parama}(x)\|_\infty \leq 3c$),
so that setting $c = \frac{L}{3}$ yields an $L$-Lipschitz function.

Our next step is to use Fano's inequality~\cite{Cover} to lower bound
the probability of error in the multiway testing problem associated
with this stochastic oracle, following an argument similar to (but
somewhat simpler than) the proof of Lemma~\ref{LemFano}.  Fano's
inequality yields the lower bound
\begin{align}
\label{EqnAltFano}
\P [\widehat{\rparam} \neq \Rparam] & \geq 1 -
\frac{\frac{1}{{|\packset| \choose 2 }} \sum_{\parama \neq \paramb}
  \KL{\mprob_\parama}{\mprob_\paramb} + \log 2}{\log |\packset|}.
\end{align}
(As in the proof of Lemma~\ref{LemFano}, we have used convexity of
mutual information~\cite{Cover} to bound it by the average of the
pairwise KL divergences.)  By construction, any two parameters
$\parama,\paramb \in \packset$ differ in at most $2 \kdim$ places, and
the remaining entries are all zeroes in both vectors.  The proof of
Lemma~\ref{LemFano} shows that for $\err \in [0, \frac{1}{4}]$, each
of these $2 \kdim$ places makes a contribution of at most $16
\delta^2$.  Recalling that we have $T$ samples, we conclude that
$\KL{\mprob_\parama}{\mprob_\paramb} \leq 32 \kdim T \err^2$.
Substituting this upper bound into the Fano lower
bound~\eqref{EqnAltFano} and recalling that the cardinality of
$\packset$ is at least $\exp
\big(\frac{\kdim}{2}\log\frac{\dim-\kdim}{\kdim/2} \big)$, we obtain
\begin{align}
\label{eqn:lowerbound2}
\P[\esta{\meth_T} \ne \parama] & \geq 
1 - 2 \left(\frac{32 \kdim T
  \err^2 + \log 2}{\frac{\kdim}{2}\log\frac{\dim-\kdim}{\kdim/2}}
  \right)
\end{align}
By Lemma~\ref{LemRhoSparse} and our choice $c = L/3$, we have
\begin{align*}
\errfun(\err) \geq \frac{c \kdim \err\inrad}{4} \; = \; \frac{L \kdim
  \err\inrad}{12},
\end{align*}
Therefore, if we aim for the target error $\epsilon = \frac{L \kdim
  \err\inrad}{108}$, then we are guaranteed that $\epsilon \leq
\frac{\errfun(\err)}{9}$, as is required for the application of
Lemma~\ref{lemma:redn}. Recalling the requirement $\err \leq 1/4$
gives $\acc \leq \Lval\kdim\err\inrad/432$. Now Lemma~\ref{lemma:redn}
implies that $\P[\esta{\meth_T} \neq \parama] \leq 1/3$, which when
combined with the earlier bound~\eqref{eqn:lowerbound2} yields
\begin{align*}
\frac{1}{3} & \geq 1 - 2 \left(\frac{32 \kdim T \err^2 + \log
  2}{\frac{\kdim}{2}\log\frac{\dim-\kdim}{\kdim/2}} \right).
\end{align*}
Rearranging yields the lower bound
\begin{align*}
T & \geq c_0\, \left(\frac{\log \frac{\dim - \kdim}{\kdim/2}}{\delta^2}
\right) \; = \; c_0 \left ( L^2 \inrad^2\, \kdim^2 \, \frac{\log
\frac{\dim - \kdim}{\kdim/2}}{\epsilon^2} \right),
\end{align*}
for a universal constant $c_0$, where the second step uses the
relation $\err = \frac{108 \epsilon}{L \kdim\inrad}$ for $k,d \geq
11$.  As long as $\kdim \leq \lfloor \dim/2 \rfloor$, we have $\log
\frac{\dim - \kdim}{\kdim/2} = \Theta\left(\log
\frac{\dim}{\kdim}\right)$, which gives the result for $k,d \geq
11$. The result for $k,d \leq 11$ follows Theorem~\ref{ThmConvex}(b)
applied with $p = \infty$, completing the proof. \\

\section{Discussion}
\label{SecDiscussion}

In this paper, we have studied the complexity of convex optimization
within the stochastic first-order oracle model.  We derived lower
bounds for various function classes, including convex functions,
strongly convex functions, and convex functions with sparse optima.
As we discussed, our lower bounds are sharp in general, since there
are matching upper bounds achieved by known algorithms, among them
stochastic gradient descent and stochastic mirror descent.  Our bounds
also reveal various dimension-dependent and geometric aspects of the
stochastic oracle complexity of convex optimization.  An interesting
aspect of our proof technique is the use of tools common in
statistical minimax theory.  In particular, our proofs are based on
constructing packing sets, defined with respect to a pre-metric that
measures how the degree of separation between the optima of different
functions.  We then leveraged information-theoretic techniques, in
particular Fano's inequality and its variants, in order to establish
lower bounds.

There are various directions for future research.  It would be
interesting to consider the effect of memory constraints on the
complexity of convex optimization, or to derive lower bounds for
problems of distributed optimization.  We suspect that the proof
techniques developed in this paper may be useful for studying these
related problems.


\subsection*{Acknowledgements}
AA and PLB gratefully acknowledge partial support from NSF awards
DMS-0707060 and DMS-0830410 and DARPA-HR0011-08-2-0002.  AA was also
supported in part by a Microsoft Research Fellowship.  MJW and PR were
partially supported by funding from the National Science Foundation
(DMS-0605165, and DMS-0907632).  In addition, MJW received funding
from the Air Force Office of Scientific Research (AFOSR-09NL184). We
also thank the anonymous reviewers for helpful suggestions, and
corrections to our results and for pointing out the optimality of our
bounds in the primal-dual norm setting.


\appendix






\section{Proof of Lemma~\ref{lemma:separation-strong}}
\label{app:separation-strong}
Let $g_\parama$ and $g_\paramb$ be an arbitrary pair of functions in
our class, and recall that the constraint set $\ConvSet$ is given by
the ball $\Ball_\infty(r)$. From the definition~\eqref{EqnDefnDisc} of
the discrepancy $\disc$, we need to compute the single function
infimum $\inf_{x \in \Ball_\infty(r)} g_\parama(x)$, as well as the
quantity $\inf_{x \in \Ball_\infty(r)} \{ g_\parama(x) + g_\paramb(x)
\}$.

\paragraph{Evaluating  the single function infimum:}
Beginning with the former quantity, first observe that for any
\mbox{$x \in \Ball_\infty(\inrad)$,} we have
\begin{align}
\label{EqnUsefulAbsolute}
  |\x{i}+\inrad| = \x{i} + \inrad\quad\mbox{and}\quad|\x{i} - \inrad|
  = \inrad - \x{i}.
\end{align}
Consequently, using the definition~\eqref{eqn:base-strong} of the base
functions, some algebra yields the relations
\begin{align*}
\fp_i(x) & = \frac{1-\theta}{4}\x{i}^2 + \frac{1+3\theta}{4}\inrad^2 +
\frac{(1+\theta)}{2}\inrad\x{i}, \qquad \mbox{and} \\
\fn_i(x) & = \frac{1-\theta}{4}\x{i}^2 + \frac{1+3\theta}{4}\inrad^2 -
\frac{(1+\theta)}{2}\inrad\x{i}.
\end{align*}
Using these expressions for $\fp_i$ and $\fn_i$, we obtain
\begin{align*}
\underbrace{\left(\half+\parama_i \err \right)\fp_i(x) +
  \left(\half-\parama_i \err \right)\fn_i(x)}_{h_i(x)} & = \frac{1}{2}
\big( \fp_i(x) + \fn_i(x) \big) + \parama_i \err \big( \fp_i(x) -
\fn_i(x) \big) \\
& = \frac{1-\theta}{4}\x{i}^2 + \frac{1+3\theta}{4}\inrad^2 +
(1+\theta)\parama_i\err\inrad \x{i}.
\end{align*}
A little calculation shows that constrained minimum of the univariate
function $h_i$ over the interval $[-r, r]$ is achieved at
\begin{align*}
\xstar(i) & \defn \begin{cases}
  \frac{-2\parama_i\err\inrad(1+\theta)}{1-\theta} & \mbox{if
    $\frac{1-\theta}{1+\theta} \geq 2\err$} \\
-\parama_i\inrad & \mbox{if $\frac{1-\theta}{1+\theta} < 2\err$,}
\end{cases}
\end{align*}
where we have recalled that $\parama_i$ takes values in $\{-1, +1\}$.
Substituting the minimizing argument $\xstar(i)$, we find that the
minimum value is given by
\begin{align*}
h_i(\xstar(i)) & =  \begin{cases}  \frac{1+3\theta}{4}\inrad^2 -
  \frac{\err^2\inrad^2(1+\theta)^2}{(1-\theta)} &  
\mbox{if 
    $\frac{1-\theta}{1+\theta} \geq 2\err$} \\
\frac{1+\theta}{2}\inrad^2 - (1+\theta)\err\inrad^2 &     \mbox{if
  $\frac{1-\theta}{1+\theta} < 2\err$}. 
\end{cases}
\end{align*}
Summing over all co-ordinates $i \in \{1, 2, \ldots, d\}$, we obtain
\begin{align}
\label{EqnGinf}
  \inf_{x \in \Ball_{\infty}(\inrad)} g_{\parama}(x) & =
  \frac{c}{d}\sum_{i=1}^d h_i(\xstar(i)) \; = \; \begin{cases} 
    -\frac{\err^2\inrad^2c(1+\theta)^2}{(1-\theta)} +
    \frac{c\inrad^2(1+3\theta)}{4} & \mbox{if
      $\frac{1-\theta}{1+\theta} \geq 2\err$} \\
\frac{1+\theta}{2} c \inrad^2 - (1+\theta) c \err \inrad^2 &
\mbox{if 
    $\frac{1-\theta}{1+\theta} < 2\err$.} 
  \end{cases}
\end{align}

\paragraph{Evaluating the joint infimum:}
Here we begin by observing that for any two $\parama, \paramb \in
\packset$, we have
\begin{align}
\label{EqnBasicObserve}
  g_\parama(x) + g_\paramb(x) & = \frac{c}{d} \sum_{i=1}^d \left[
    \frac{1-\theta}{2} \x{i}^2 + \frac{1+3\theta}{2}\inrad^2 +
    2(1+\theta)\parama_i\err\inrad\x{i} \ind(\parama_i =
    \paramb_i)\right].
\end{align}
As in our previous calculation, the only coordinates that contribute
to $\disc(g_\parama, g_\paramb)$ are the ones where $\parama_i \ne
\paramb_i$, and for such coordinates, the function above is minimized
at $\xstar(i) = 0$. Furthermore, the minimum value for any such
coordinate is $(1 + 3 \theta) c \inrad^2 /(2 d)$.

We split the remainder of our analysis into two cases: first, if we
suppose that $\frac{1-\theta}{1+\theta} \geq 2\err$, or equivalently
that $1-\theta \geq 4\err/(1+2\err)$, then
equation~\eqref{EqnBasicObserve} yields that
\begin{align*}
\inf_{x \in \Ball_\infty(r)} \big \{ g_\parama(x) + g_\paramb(x) \} &
= \frac{c}{d}\sum_{i=1}^d \left[\frac{1+3\theta}{2}\inrad^2 -
  \frac{2\err^2\inrad^2(1+\theta)^2}{1-\theta} \ind(\parama_i =
  \paramb_i)\right].
\end{align*}
Combined with our earlier expression~\eqref{EqnGinf} for the single
function infimum, we obtain that the discrepancy is given by
\begin{align*}
  \disc(g_\parama, g_\paramb) & =
  \frac{2\err^2\inrad^2c(1+\theta)^2}{d(1-\theta)} \Hamm(\parama,
  \paramb) \geq \frac{2\err^2\inrad^2c}{d(1-\theta)} \Hamm(\parama,
  \paramb).
\end{align*}

On the other hand, if we assume that $\frac{1-\theta}{1+\theta} <
2\err$, or equivalently that $1-\theta < 4\err/(1+2\err)$, then
we obtain
\begin{align*}
\inf_{x \in \Ball_\infty(r)} \big \{ g_\parama(x) + g_\paramb(x) \} &
= \frac{c}{d}\sum_{i=1}^d \left[\frac{1+3\theta}{2}\inrad^2 -
  \left(2(1+\theta)\inrad^2\err - \frac{1-\theta}{2}\inrad^2\right)
  \ind(\parama_i = \paramb_i)\right],
\end{align*}
Combined with our earlier expression~\eqref{EqnGinf} for the single
function infimum, we obtain
\begin{align*}
  \disc(g_\parama, g_\paramb) =
  \frac{c}{d}\left(2(1+\theta)\inrad^2\err -
  \frac{1-\theta}{2}\inrad^2\right)\Hamm(\parama, \paramb)
  \stackrel{(i)}{\geq} \frac{c(1+\theta)\inrad^2\err}{d}\Hamm(\parama,
  \paramb),
\end{align*}
where step (i) uses the bound $1-\theta < 2 \err (1 + \theta)$. Noting
that $\theta \geq 0$ completes the proof of the lemma.

\section{Proof of Lemma~\ref{LemRhoSparse}}
\label{AppLemRhoSparse}

Recall that the constraint set $\ConvSet$ in this lemma is the ball
$\Ball_\infty(r)$.  Thus, recalling the definition~\eqref{EqnDefnDisc}
of the discrepancy $\disc$, we need to compute the single function
infimum $\inf_{x \in \Ball_\infty(r)} g_\parama(x)$, as well as the
quantity $\inf_{x \in \Ball_\infty(r)} \{ g_\parama(x) + g_\paramb(x)
\}$.

\paragraph{Evaluating  the single function infimum:}
Beginning with the former quantity, first observe that for any
\mbox{$x \in \Ball_\infty(\inrad)$,} we have
\begin{align}
\label{EqnSahand}
\left[ \half + \parama_i \err \right] \; \left| \x{i} + \inrad \right| + \left
    [ \half - \parama_i \err \right] \; \left |\x{i} - \inrad \right| =
    \inrad + 2\parama_i \err \x{i}.
\end{align}
We now consider one of the individual terms arising in the
definition~\eqref{EqnGClassFun} of the function $g_\parama$.  Using
 the relation~\eqref{EqnSahand}, it can be written as
\begin{align*}
\frac{1}{d}\left[\left(\half + \parama_i \err \right)\fp_i(x) +
  \left(\half - \parama_i \err \right) \fn_i(x)\right] & =\left (\half
+ \parama_i\err \right) \left| \x{i} + \inrad \right| + \left(\half -
\parama_i \err \right) \left |\x{i}-\inrad \right| + \err|\x{i}|
\nonumber \\
& = \begin{cases} \inrad + (2 \parama_i + 1) \err \x{i} & \mbox{if
 $\x{i} \geq 0$} \\ \inrad + (2\parama_i-1)\err\x{i} & \mbox{if $\x{i}
     \leq 0$}
 \end{cases}
\end{align*}
From this representation, we see that whenever $\parama_i \neq 0$,
then the $i^{th}$ term in the summation defining $g_\parama$ minimized
at $\x{i} = -\inrad\parama_i$, at which point it takes on its minimum
value $\inrad(1 - \err)$.  On the other hand, for any term with
$\parama_i = 0$, the function is minimized at $\x{i} = 0$ with
associated minimum value of $\inrad$.  Combining these two facts shows
that the vector $-\parama\inrad$ is an element of the set $\arg
\min_{x \in \ConvSet} g_\parama(x)$, and moreover that
\begin{align}
\label{eqn:ming}
 \inf_{x \in \ConvSet} g_\parama(x) & = c\inrad \left( \dim - \kdim
 \err\right).
\end{align}

\paragraph{Evaluating the joint infimum:}
We now turn to the computation of $\inf_{x \in \Ball_\infty(r)}
\{g_\parama(x) + g_\paramb(x) \}$.  From the
relation~\eqref{EqnSahand} and the definitions of $g_\parama$ and
$g_\paramb$, some algebra yields
\begin{align}
\label{eqn:gsum}
\inf_{x \in \ConvSet} \left \{ g_\parama(x) + g_\paramb(x) \right \} & =
c \inf_{x \in \ConvSet} \sum_{i=1}^d \left\{ 2\inrad + 2 \err \left[
  (\parama_i+ \paramb_i) \x{i} + |\x{i}| \right] \right \}.
\end{align}

Let us consider the minimizer of the $i^{th}$ term in this summation.
First, suppose that $\parama_i \neq \paramb_i$, in which case there
are two possibilities.
\begin{itemize}
\item If $\parama_i \neq \paramb_i$ and neither $\parama_i$ nor
  $\paramb_i$ is zero, then we must have $\parama_i + \paramb_i = 0$,
  so that the minimum value of $2\inrad$ is achieved at $\x{i} = 0$.
\item Otherwise, suppose that $\parama_i \neq 0$ and $\paramb_i = 0$.
  In this case, we see from Equation~\eqref{eqn:gsum} that it is
  equivalent to minimizing $\parama_i\x{i} + |\x{i}|$. Setting $\x{i}
  = -\parama_i$ achieves the minimum value \mbox{of $2\inrad$.}
\end{itemize}
In the remaining two cases, we have $\parama_i = \paramb_i$.
\begin{itemize}
\item
If $\parama_i = \paramb_i \ne 0$, then the component is minimized at
$\x{i} = -\parama_i\inrad$ and the minimum value along the component
is $2\inrad(1 - \err)$.
\item If $\parama_i = \paramb_i = 0$, then the minimum value is
  $2\inrad$, achieved at $\x{i} = 0$.
\end{itemize}
Consequently, accumulating all of these individual cases into a single
expression, we obtain
\begin{align}
\label{eqn:mingsum}
\inf_{x \in \ConvSet} \left \{ g_\parama(x) + g_\paramb(x) \right \} & =
2c\inrad \, \left(\dim - \err\sum_{i=1}^\dim \ind[\parama_i = \paramb_i \ne
  0] \right).
\end{align}

Finally, combining equations~\eqref{eqn:ming} and~\eqref{eqn:mingsum}
in the definition of $\disc$, we find that
\begin{align*}
\disc(g_\parama,g_\paramb) &= 2c\inrad \left[d - \err\sum_{i=1}^\dim
  \ind[ \parama_i = \paramb_i \ne 0] - (\dim - \kdim \err ) \right]
\\ 
&= 2c\err\inrad \left[\kdim - \sum_{i=1}^\dim\ind[ \parama_i =
    \paramb_i \ne 0] \right] \\
& = c\inrad \err\Hamm(\parama,\paramb),
\end{align*}
where the second equality follows since $\parama$ and $\paramb$ have
exactly $\kdim$ non-zero elements each. Finally, since $\packset$ is an
$\kdim/2$-packing set in Hamming distance, we have $\Hamm(\parama,
\paramb) \geq \kdim/2$, which completes the proof. 

\section{Upper bounds via mirror descent}
\label{AppMirror}
 
This appendix is devoted to background on the family of mirror descent
methods.  We first describe the basic form of the algorithm and some
known convergence results, before showing that different forms of
mirror descent provide matching upper bounds for several of the lower
bounds established in this paper, as discussed in the main text.

\subsection{Background on mirror descent}

Mirror descent is a generalization of (projected) stochastic gradient
descent, first introduced by Nemirovski and Yudin~\cite{yudin83book};
here we follow a more recent presentation of it due to Beck and
Teboulle~\cite{beck2003mirror}.  For a given norm $\norm{\cdot}$, let
$\prox:\real^d \rightarrow \real \cup \{+\infty\}$ be a differentiable
function that is $1$-strongly convex with respect to $\norm{\cdot}$,
meaning that
\begin{align*}
\prox(y) & \geq \prox(x) + \inprod{\grad \prox(x)}{y-x} + \half
\norm{y-x}^2.
\end{align*}
We assume that $\prox$ is a function of Legendre
type~\cite{Rockafellar, Hiriart1}, which implies that the conjugate
dual $\prox^*$ is differentiable on its domain with $\grad \prox^* =
\big(\grad \prox \big)^{-1}$.  For a given proximal function, we let
$D_\prox$ be the Bregman divergence induced by $\prox$, given by
\begin{align}
D_\prox(x,y) & \defn \prox(x) - \prox(y) - \inprod{\grad
  \prox(y)}{x-y}.
\end{align}
With this set-up, we can now describe the mirror descent algorithm
based on the proximal function $\prox$ for minimizing a convex
function $f$ over a convex set $\set$ contained within the domain of
$\prox$.  Starting with an arbitrary initial $x_0 \in \set$, it
generates a sequence $\{x_t\}_{t=0}^\infty$ contained within $\set$
via the updates
\begin{align}
\label{EqnMirrorUp}
x_{t+1} & = \arg \min_{x \in \set} \big \{\eta_t \inprod{x}{\grad
  f(x_t)} + D_\prox(x,x_t) \big\},
\end{align}
where $\eta_t > 0$ is a stepsize. In case of stochastic optimization,
$\nabla f(x_t)$ is simply replaced by the noisy version
$\oracleg(x_t)$.

A special case of this algorithm is obtained by choosing the proximal
function $\prox(x) = \half\|x\|_2^2$, which is $1$-strongly convex
with respect to the Euclidean norm.  The associated Bregman divergence
$D_\prox(x,y) = \half\|x-y\|_2^2$ is simply the Euclidean norm, so
that the updates~\eqref{EqnMirrorUp} correspond to a standard
projected gradient descent method.  If one receives only an unbiased
estimate of the gradient $\nabla f(x_t)$, then this algorithm
corresponds to a form of projected stochastic gradient descent.
Moreover, other choices of the proximal function lead to different
stochastic algorithms, as discussed below.

Obtaining explicit convergence rates for this algorithm can be
obtained under appropriate convexity and Lipschitz assumptions for
$f$.  Following the set-up used in our lower bound analysis, we assume
that $\E\dnorm{\grad \oracleg(x_t)}^2 \leq L^2$ for all $x \in \set$,
where $\dnorm{v} \defn \sup_{\norm{x} \leq 1} \inprod{x}{v}$ is the
dual norm defined by $\norm{\cdot}$. Given stochastic mirror descent
based on unbiased estimates of the gradient, it can be showed that
(see e.g., Chapter 5.1 of NY~\cite{yudin83book} or Beck and
Teboulle~\cite{beck2003mirror}) with the initialization $x_0 =
\arg\min_{x \in \Sset} \prox(x)$ and stepsizes $\eta_t = 1/\sqrt{t}$,
the optimization error of the sequence $\{x_t\}$ is bounded as
\begin{align}
\label{EqnBeckTeb}
\frac{1}{T}\sum_{t=1}^T \E \big[f(x_t) - f(x^*)\big] & \leq
L\sqrt{\frac{D_\prox(x^*,x_1)}{T}} \; \leq \;
L\sqrt{\frac{\prox(x^*)}{T}}
\end{align}

Note that this averaged convergence is a little different from the
convergence of $x_T$ discussed in our lower bounds. In order to relate
the two quantities, observe that by Jensen's inequality
\begin{align*}
  \E \left [f\left(\frac{\sum_{t=1}^Tx_t}{T} \right) \right] & \leq
  \frac{1}{T} \E \big[f(x_t) \big].
\end{align*}
Consequently, based on mirror descent for $T-1$ rounds, we may set
$x_T = \frac{1}{T-1}\sum_{t=1}^{T-1}x_t$ so as to obtain the same
convergence bounds up to constant factors. In the following
discussion, we assume this choice of $x_T$ for comparing the mirror
descent upper bounds to our lower bounds.


\subsection{Matching upper bounds}

\newcommand{\specp}{\ensuremath{a}}

Now consider the form of mirror descent obtained by choosing the
proximal function 
\begin{align}
\label{EqnSpecProx}
\prox_\specp(x) & \defn \frac{1}{(\specp-1)}\norm{x}_\specp^2 \qquad
\mbox{for $1 < \specp \leq 2$.}
\end{align}
Note that this proximal function is $1$-strongly convex with respect
to the $\ell_\specp$-norm for $1 < \specp \leq 2$, meaning that
\begin{align*}
  \frac{1}{(\specp-1)}\norm{x}_\specp^2 & \geq
  \frac{1}{(\specp-1)}\norm{y}_\specp^2 +
  \biggl(\grad\frac{1}{(\specp-1)}\norm{x}_\specp^2\bigg)^T (x-y) +
  \half\norm{x-y}_\specp^2.
\end{align*}

\paragraph{Upper bounds for dual setting:} Let us start from the case
$1 \leq p \leq 2$. In this case we use stochastic gradient descent
with , and the choice of $p$ ensures that $\E\norm{\oracleg(x)}^2_2
\leq \E\norm{\oracleg(x)}^2_p \leq L^2$ (the second inequality is true
by assumption of Theorem~\ref{ThmConvex}). Also a straightforward
calculation shows that $\norm{x^*}_2 \leq \norm{x^*}_qd^{1/2-1/q}$ so
that we get the upper bound:
\begin{align*}
  \E\left[f(x_T) - f(x^*)\right] &=
  \order\left(\frac{Ld^{1/2-1/q}}{\sqrt{T}}\right), 
\end{align*}
which matches the lower bound from Equation~\eqref{EqnDualLB} for this
case. For $p \geq 2$, we use mirror descent with $a = q = p/(p-1)$. In
this case, $\E\norm{\oracleg(x)}^2_p \leq L^2$ and $\norm{x^*}_q \leq
1$ for the convex set $\Ball_q(1)$ and the function class
$\Fconv(\Ball_q(1), L, p)$. Hence in this case, the upper bound from
Equation~\ref{EqnBeckTeb} is $\order(L/\sqrt{T})$ as long as $p =
o(\log d)$, which again matches our lower bound from
Equation~\ref{EqnDualLB}. Finally, for $p = \Omega(\log d)$, we use
mirror descent with $a = 2\log d/(2\log d - 1)$, which gives an upper
bound of $\order(L\sqrt{\log d/T})$ (since $1/(a-1) = \order(\log d)$
in this regime).

\paragraph{Upper bounds for $\ell_\infty$ ball:}
 For this case, we use mirror descent based on the proximal function
 $\Prox_\specp$ with $\specp = q$.  Under the condition
 $\norm{\xstar}_\infty \leq 1$, a condition which holds in our lower
 bounds, we obtain
\begin{align*}
  \norm{\xstar}_q & \leq \|\xstar\|_\infty \; \dim^{1/q} \; = \;
  \dim^{1/q},
\end{align*}
which implies that $\Phi_q(\xstar) = \order(\dim^{2/q})$.  Under the
conditions of Theorem~\ref{ThmConvex}, we have
$\E\norm{\oracleg(x_t)}^2_p \leq L^2$ where $p = q/(q-1)$ defines the
dual norm. Note that the condition $1 < q \leq 2$ implies that $p \geq
2$.  Substituting this in the upper bound~\eqref{EqnBeckTeb} yields
\begin{align*}
  \E \big[ f(x_T) - f(x^*) \big] & = \order \bigg( L \;
  \sqrt{\dim^{2/q}/T} \bigg) \; = \; \order \bigg( L \dim^{1-1/p}
  \sqrt{\frac{1}{T}} \bigg),
\end{align*}
which matches the lower bound from Theorem~\ref{ThmConvex}(b) (we note
that there is an additional log factor here just like the preceding
discussion when $p = \order(\log d)$ which we ignore here).

For $1 \leq p \leq 2$, we use stochastic gradient descent with $q =
2$, in which case $\norm{x^*}_2 \leq \sqrt{d}$ and
$\E\norm{\oracleg(x_t)}_2^2 \leq \E\norm{\oracleg(x_t)}_p^2 \leq L^2$
by assumption. Substituting these in the upper bound for mirror
descent yields an upper bound to match the lower bound of
Theorem~\ref{ThmConvex}(a).

\paragraph{Upper bounds for Theorem~\ref{ThmSparse}:}

In order to recover matching upper bounds in this case, we use the
function $\prox_\specp$ from Equation~\eqref{EqnSpecProx} with 
$\specp = \frac{2 \log \dim}{2\log\dim - 1}$. In this case, the
resulting upper bound~\eqref{EqnBeckTeb} on the convergence rate takes
the form
\begin{align}
\label{eqn:sparserate}
\E \left[ f(x_T) - f(x^*) \right] \; = \; \order \left
(L\sqrt{\frac{\|x^*\|_\specp^2}{2 (\specp-1) T }} \right) & = \order
\left(L\sqrt{\frac{\|x^*\|_\specp^2 \log \dim}{T}} \right),
\end{align}
since $\frac{1}{a - 1} = 2 \log \dim - 1$. Based on the conditions of
Theorem~\ref{ThmSparse}, we are guaranteed that $\xstar$ is
$\kdim$-sparse, with every component bounded by $1$ in absolute value,
so that $\norm{\xstar}_\specp^2 \leq \kdim^{2/\specp} \leq \kdim^2$,
where the final inequality follows since $\specp > 1$. Substituting
this upper bound back into Equation~\eqref{eqn:sparserate} yields
\begin{align*}
  \E \left[f(x_T) - f(x^*) \right] & = \order \left
  (L\sqrt{\frac{\kdim^2 \log \dim}{T}} \right).
\end{align*}
Note that whenever $\kdim = \HackoRama$ for some $\delta > 0$, then we
have $\log \dim = \Theta(\log \frac{\dim}{\kdim})$, in which case this
upper bound matches the lower bound from Theorem~\ref{ThmSparse} up to
constant factors, as claimed.


\bibliographystyle{IEEEtran} 
\bibliography{IEEEabrv,optimization}

\end{document}